\documentclass[11pt]{article}

\usepackage[letterpaper,margin=1in]{geometry}
\usepackage[utf8]{inputenc}
\usepackage[T1]{fontenc}
\usepackage{lmodern}
\usepackage[numbers,sort&compress]{natbib}
\usepackage{amsmath,amssymb,amsfonts,amsthm,mathtools}
\usepackage{booktabs}
\usepackage{array}
\usepackage{multirow}
\newtheorem{theorem}{Theorem}

\usepackage{graphicx}
\usepackage{subcaption}
\usepackage{float}
\graphicspath{{figures/}}
\usepackage{microtype}
\usepackage{hyperref}
\usepackage{url}

\title{SMILE: Self-Explainable Multimodal Information Bottleneck for Medical Diagnosis}
\author{Yuqing YANG\textsuperscript{1}, Alexander SCHMATZ\textsuperscript{2,6}, Zhaozhao MA\textsuperscript{3}, Changkyu CHOI\textsuperscript{4}, \\ Robert JENSSEN\textsuperscript{5}, 
Shujian YU\textsuperscript{5,6,*}}
\newcommand{\authoraffiliations}{%
  \textsuperscript{1}Independent Researcher\\
  \textsuperscript{2}Leiden Institute of Advanced Computer Science (LIACS), Leiden University, Leiden, The Netherlands\\
  \textsuperscript{3}School of Applied and Creative Computing, Purdue University, West Lafayette, IN, USA\\
  \textsuperscript{4}Department of Informatics, University of Oslo, Oslo, Norway\\
  \textsuperscript{5}Machine Learning Group, UiT --- The Arctic University of Norway, Troms\o, Norway\\
  \textsuperscript{6}Quantitative Data Analytics Group, Vrije Universiteit Amsterdam, Amsterdam, The Netherlands\\[0.35em]
  \textsuperscript{*}\emph{Corresponding author: yusj9011@gmail.com.}
}
\date{}

\makeatletter
\renewcommand{\maketitle}{%
  \begin{center}
    {\Large\bfseries \@title\par}
    \vspace{0.9em}
    {\small\itshape \@author\par}
    \vspace{0.65em}
    {\footnotesize \authoraffiliations\par}
  \end{center}
  \vspace{1.25em}
}
\makeatother

\theoremstyle{plain}
\newtheorem{proposition}{Proposition}

\theoremstyle{remark}

\begin{document}
\maketitle

\begin{abstract}
Explainability is increasingly seen as a crucial requirement in AI-based medical diagnosis, particularly in safety-critical clinical decision-making. Most existing explainability methods in healthcare operate in a post-hoc manner and are predominantly designed for unimodal data, which limits their applicability in increasingly prevalent multimodal diagnostic settings. 
This paper addresses the problem of self-explainable multimodal diagnosis by formulating it within the information bottleneck (IB) framework. We propose a unified learning paradigm that jointly optimizes predictive performance and modality-specific explainability by identifying the most informative elements inside each modality that contribute to diagnostic decisions. To enable tractable and stable optimization, we employ a matrix-based R\'enyi’s $\alpha$-order entropy functional under the assumption of sufficiently expressive encoders. Extensive experiments on representative medical datasets spanning heterogeneous modalities demonstrate that the proposed method consistently achieves strong diagnostic performance, including an absolute accuracy improvement of 9.1 percentage points on the iCTCF dataset. Moreover, the learned explanations provide transparent and modality-aware insights into feature relevance, thereby improving both the explainability and generalization.
\end{abstract}

\section{Introduction}
Deep learning has achieved substantial success in a wide range of application domains, including medical diagnosis, where data-driven models have demonstrated strong predictive performance across diverse clinical tasks~\cite{holzinger2019causability, he2022active}. Despite these advances, the increasing complexity of modern deep models has rendered their decision-making processes largely opaque; their ``black-box" nature becomes a significant barrier to clinical adoption. In safety-critical medical settings, such opacity undermines clinicians’ confidence in assessing the reliability of automated diagnostic decisions, thereby motivating a growing demand for explainable artificial intelligence (XAI)~\cite{jia2020clinical,samek2019explainable}.


Representative post-hoc explanation methods explain the predictions of an already-trained model without modifying its training process. Methods such as Grad-CAM~\cite{selvaraju2017grad}, LIME~\cite{ribeiro2016should}, and SHAP~\cite{lundberg2017unified}, aim to identify salient input features or regions that influence model outputs, and have been applied to explain medical images~\cite{gallo2023functional} as well as non-image clinical data~\cite{wang2021interpretability}. While effective in unimodal scenarios, these methods are predominantly designed for natural images or text, and their post-hoc nature may limit the faithfulness of the resulting explanations. As a result, their applicability to complex clinical decision-making pipelines remains limited.

In real-world medical diagnosis, clinical decisions are rarely based on a single data source. Instead, heterogeneous modalities—such as medical imaging, genomic profiles, and structured clinical records—are jointly considered, with each modality providing unique and complementary information about patient health. Multimodal learning frameworks that integrate such heterogeneous data have demonstrated notable improvements in diagnostic performance by capturing cross-modal dependencies and shared representations~\cite{wang2021mogonet, han2022multimodal, fang2024dynamic}.

Despite their success, multimodal diagnostic models introduce additional challenges for explainability. The heterogeneity of data types and the complex interactions between modalities make it difficult to interpret how individual features and modalities contribute to the final prediction. Existing explainability approaches for multimodal models largely rely on post-hoc extensions of unimodal methods~\cite{kim2018interpretability, wang2021interpretability}, which may not faithfully reflect the model's decision-making process. Consequently, there remains a lack of principled frameworks that jointly optimize multimodal prediction and explanation within a unified learning objective.


To fill this gap, we study self-explainable multimodal medical diagnosis from an information-theoretic perspective by formulating it as an information bottleneck (IB) problem~\cite{tishby99information}, where modality-specific explainers are learned to identify compact subsets of input elements that retain diagnostic information while discarding redundant information. This formulation naturally couples explanation generation with representation learning, ensuring that the selected features are intrinsically aligned with the model's decision process rather than obtained post hoc.
Moreover, distinct explainers are designed for each modality to reflect their heterogeneous clinical data characteristics. To enable tractable and stable optimization of the IB objective, we assume sufficiently expressive modality-specific encoders~\cite{tian2020makes} and employ a matrix-based R\'enyi’s $\alpha$-order entropy functional~\cite{giraldo2014measures,yu2019multivariate} for mutual information estimation, which avoids introducing auxiliary parametric models such as the mutual information neural estimator (MINE)~\cite{belghazi2018mutual}. 


The main contributions of this work include:
\begin{itemize}
\item We propose \textbf{S}elf-explainable \textbf{M}ultimodal \textbf{I}nformation bott\textbf{LE}neck (SMILE), which, to the best of our knowledge, is the first IB-based self-explainable framework for multimodal medical diagnosis. Different from post-hoc approaches that analyze an already-trained predictor, SMILE learns the predictive representation and explanation jointly, making the selected features an intrinsic component of the decision process.

\item We develop a relaxed and tractable training objective for SMILE that enables efficient and stable learning. Furthermore, we provide theoretical insights into the generalization behavior of SMILE, showing that it does not suffer from an intrinsic trade-off between explainability and predictive accuracy.

\item We evaluate SMILE on multiple representative multimodal medical datasets, where it consistently achieves state-of-the-art diagnostic performance. In addition, SMILE produces instance-wise, modality-specific explanations that are quantitatively and qualitatively validated, demonstrating improved faithfulness and explainability.
\end{itemize}

The remainder of this paper is structured as follows: Section~\ref{sec: rw} gives a brief review of related works, and Section~\ref{sec: method} introduces each module of SMILE and gives a generalization analysis of the model. Section~\ref{sec: ex} presents both the quantitative and qualitative experimental results on diverse representative datasets. Finally, Section~\ref{sec: conclude} draws the conclusion.

\section{Related Work}
\label{sec: rw}

\subsection{Explainability for Medical Diagnosis}

Explainable AI (XAI) has been extensively studied to improve the transparency of AI-based diagnostic systems. A large body of work focuses on \emph{post-hoc} explanation, which can be broadly grouped into saliency/attribution-based and perturbation-based approaches. Saliency-based methods are widely applied in medical imaging data to visualize important areas of interest, such as tumors or lesions, using methods like Grad-CAM~\cite{selvaraju2017grad} and SmoothGrad~\cite{smilkov2017smoothgrad}. Perturbation-based methods estimate feature importance by measuring changes in model outputs under controlled input modifications; classical formulations include influence functions~\cite{koh2017understanding} and additive feature attribution such as SHAP~\cite{lundberg2017unified}.

Beyond generic post-hoc explanations, recent studies have begun to emphasize whether explanations are meaningful and relevant to clinical decision-making processes. For instance, guideline-oriented frameworks incorporate medical guidelines into explanations to improve clinical relevance~\cite{zhu2019guideline}, and clinician-centered principles provide practical recommendations for clinical XAI design~\cite{jina2022clinicalxai}. In parallel, existing work has explored explainability methods tailored to specific data modalities, including structured electronic health records and clinical time series (e.g., Dynamask~\cite{crabbe2021explaining}) and graph-structured biomedical data (e.g., GNNExplainer~\cite{ying2019gnnexplainer}).

With the growing availability of heterogeneous clinical data, explainability has become an increasingly important consideration for \emph{multimodal} diagnostic models.  Existing explanation methods for multimodal medical AI are predominantly post-hoc, often rely on classic approaches, and tend to focus on specific disease types, limiting their broader applicability. For example, \cite{kraaijveld2022multi} extends the concept activation method~\cite{kim2018interpretability} to generate human-understandable concepts for explaining prostate cancer detection using PET and CT scans. In another study~\cite{wang2021interpretability}, image data and metadata are combined for skin lesion diagnosis, where Grad-CAM is applied to interpret image features, and kernel SHAP~\cite{lundberg2017unified} is used to explain metadata. More recently, clinical perspectives highlight that diagnostic decisions often rely on multiple data modalities, which makes it more challenging to present and integrate explanations across modalities~\cite{pahud2024orchestrator}.

A closely related line of work studies multimodal explainability at the \emph{modality} or \emph{interaction} level, quantifying which modalities (or modality interactions) drive a prediction. Representative examples include interpretable fusion mechanisms that assign relevance scores to modalities and their interactions (e.g., InTense~\cite{varshneya2024intense}) and methods that explicitly quantify cross-modal interactions and modality contributions (e.g., InterSHAP~\cite{wenderoth2025intershap}). 
These modality-level analyses provide coarse-grained attribution over modalities; clinical interpretation often requires identifying \emph{which input elements within each modality} are diagnostically informative for a given patient.

In contrast to most existing multimodal explanation methods in medical AI, our work targets \emph{feature-level} self-explainability in a modality-aware manner, providing a generic framework that can, in principle, be applied to diverse data modalities and potentially adapted to multiple disease types.

\subsection{Information Bottleneck (IB) in Deep Learning}


The IB principle~\cite{tishby99information,shwartz2017opening} considers extracting relevant information from a random variable $X$ to predict another variable $Y$. It operates by identifying a ``bottleneck" variable $\tilde{X}$ that maximizes its predictive power to $Y$, as expressed by mutual information $I(Y;\tilde{X})$, while restricting the amount of information it carries about $X$, formulated as $I(X;\tilde{X})$:
\begin{equation}\label{eq:IB_original}
    \min_{p(\tilde{X}|X)} -I(Y;\tilde{X}) + \beta I(X;\tilde{X}),
\end{equation}
where $\beta \geq 0$ is a Lagrange multiplier. 

IB-inspired objectives have been studied to encourage compact, generalizable representations in deep learning~\cite{alemi2016deep}, in which $\tilde{X}$ usually refers to the latent layer representations.  


Recently, IB has been extended to multimodal representation learning to address redundancy and modality-specific noise~\cite{song2021multicolor, mai2022multimodal,fang2024dynamic}. For example, multimodal information bottleneck formulations aim to learn minimal and sufficient unimodal and joint representations under information-theoretic regularization~\cite{mai2022multimodal}. More recent work further investigates how to set regularization weights or handle imbalanced task-relevant information across modalities (e.g., OMIB)~\cite{wu2025omib}. While these works have demonstrated strong empirical generalization performance, they lack theoretical justification and the ability to explain feature contributions to the decision.


On the other hand, the IB principle has also been explored for single-modality explainability. Its central idea is to learn a differentiable mask \( M \) over the input \( X \), such that the masked input \( X \odot M \) is maximally informative for the decision. Owing to its flexibility, this formulation has been adopted for both post-hoc explainability~\cite{bang2021explaining} and self-explainability~\cite{yu2021graph,zheng2024brainib}. However, extending these approaches to heterogeneous multimodal clinical data remains an open challenge.

Closer to our goal, recent studies explicitly leverage IB to explain multimodal representations (e.g., IB-based attribution for image--text representations~\cite{wang2023vl_ib}, and cross-attention-guided multimodal IB attribution~\cite{bourigault2024cam2ib}).
However, these methods are designed as post-hoc attribution tools for specific (mostly vision-language) architectures, and do not address self-explainable diagnosis for clinical data.

\begin{figure*}[t!]
  \centering
\includegraphics[width=0.9\linewidth]{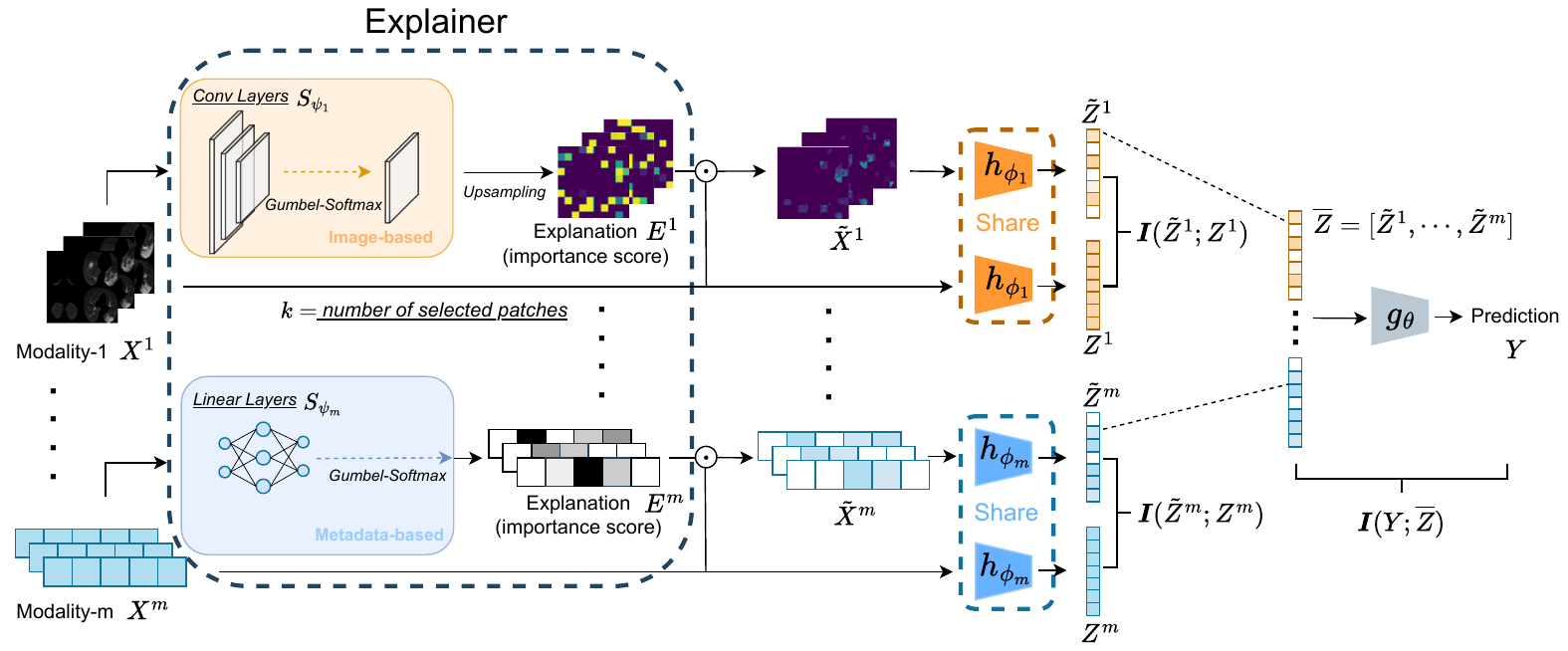}
   \caption{Overview of the proposed SMILE framework. We design a modality-specific explainer 
   to identify the top-$k$ most informative features within each modality that influence diagnostic decisions. All selected features $\{E^m\}_{m=1}^M$ serve as the final explanations and are directly used for prediction, ensuring consistency between the explanations and the model decisions. 
   }
   \label{fig: framework}
\end{figure*}

\section{Methodology}
\label{sec: method}
\subsection{Objective of SMILE}
Given \( N \) i.i.d. multi-modal observations from \( M \) modalities with their corresponding labels \(\{ \{x_i^m\}_{m=1}^M,  y_i\}_{i=1}^N \), let \( x_i^m \) denote the \( i \)-th sample in the \( m \)-th modality. Here, \( y_i \in \mathbb{R}^K \), where \( K \) represents the number of classes. Each \( x_i^m \) can be a vector, such as genomic data; a three-dimensional (3D) volume, such as chest computed tomography (CT) scans; or a graph structure, such as brain networks constructed from functional magnetic resonance imaging (fMRI) signals. Our goal is to learn a predictive model $f: \{X^m\}_{m=1}^M \rightarrow Y$, which also integrates self-explainability in the sense that $f$ is able to identify the most informative input elements from each modality. We therefore introduce a set of $M$ built-in explainers to perform a deterministic mapping $X^m \mapsto E^m$, where $\{e_i^m\}_{i = 1}^N \in E^m$ has the same size as $X^m$ and acts as an explanation to $X^m$. Each element in $E^m$ is a binary variable, with a value of $1$ indicating an informative feature and $0$ indicating a non-informative feature. Typically, $E^m$ is both sample-dependent and modality-dependent.

In order to learn a modality-specific explanation $E^m$, we employ the IB principle. Specifically, let $\tilde{{x}}_i^m = {e}_i^m \odot {x}_i^m$ denote the set of all selected features in the $m$-th modality, where $\odot$ represents element-wise multiplication, the objective for single modality explanation can be expressed as:
\begin{equation}\label{eq: obj1}
    \min_{E^m} - I(Y; \tilde{X}^m) + \beta I(\tilde{X}^m; {X}^m) + \lambda \|{E}^m\|_0,
\end{equation}
where $\|.\|_0$ is the $\ell_0$ norm that encourages the sparsity of $E^m$, only a few elements can be identified as informative, $\beta$ and $\lambda$ are non-negative regularization coefficients.

A closely related objective to our Eq.~(\ref{eq: obj1}) is the INstance-wise VAriable SElection (INVASE)~\cite{yoon2019invase}, which formulates the learning of $E$ in a single modality as:
\begin{equation}\label{eq: obj_INVASE}
    \min_E D_{\text{KL}}\left( p(Y|X) \| p(Y|\tilde{X}) \right) + \lambda \|{E}\|_0,
\end{equation}
where $D_{\text{KL}}$ refers to the Kullback-Leibler (KL) divergence.

\begin{proposition}
    The IB objective in (\ref{eq: obj1}) encompasses that of INVASE as a special case when $\beta=0$.
\end{proposition}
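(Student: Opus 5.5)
The plan is to set $\beta=0$ in (\ref{eq: obj1}) and show that the remaining objective $-I(Y;\tilde{X}^m)+\lambda\|E^m\|_0$ equals the INVASE objective (\ref{eq: obj_INVASE}) up to an additive constant that does not depend on $E^m$. The two problems then have the same minimizers. The key identity relates the expected KL divergence to mutual information. Here I read the KL term in (\ref{eq: obj_INVASE}) as its expectation over $X$ (equivalently, over the sample), which is the standard reading in INVASE.

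\textbf{Step 1.} Write the expected divergence explicitly:
\[
\mathbb{E}_{X}\bigl[D_{\text{KL}}\bigl(p(Y|X)\,\|\,p(Y|\tilde{X})\bigr)\bigr]
= \mathbb{E}_{X,Y}\bigl[\log p(Y|X)\bigr]-\mathbb{E}_{X,Y}\bigl[\log p(Y|\tilde{X})\bigr].
\]
The joint law of $(X,Y,\tilde{X})$ is well defined because $\tilde{X}=E\odot X$ is a deterministic function of $X$ for a fixed explainer.

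\textbf{Step 2.} Identify each term with an entropy. The first term is $-H(Y|X)$. For the second, since $\tilde{X}$ is a function of $X$, taking the expectation of $\log p(Y|\tilde{X})$ over $(X,Y)$ is the same as taking it over $(\tilde{X},Y)$. So the second term equals $-H(Y|\tilde{X})$, and the divergence becomes $H(Y|\tilde{X})-H(Y|X)$.

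\textbf{Step 3.} Add and subtract $H(Y)$. This gives
\[
\mathbb{E}_X\bigl[D_{\text{KL}}\bigr]=I(Y;X)-I(Y;\tilde{X}).
\]
Because $I(Y;X)$ depends only on the data distribution and not on $E$, minimizing (\ref{eq: obj_INVASE}) is equivalent to minimizing $-I(Y;\tilde{X})+\lambda\|E\|_0$. This is exactly (\ref{eq: obj1}) with $\beta=0$, applied to each modality. As a byproduct, the data-processing inequality $I(Y;\tilde{X})\le I(Y;X)$ confirms that the KL term is nonnegative, which is consistent with the construction.

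\textbf{Main obstacle.} The subtle step is Step 2, which uses the fact that $\tilde{X}$ is a deterministic function of $X$ (the Markov chain $Y\to X\to\tilde{X}$). This is what lets the conditional expectation collapse onto $\tilde{X}$. I would state this assumption explicitly, since the explainer $X^m\mapsto E^m$ is defined as deterministic. I would also note that the sparsity term $\lambda\|E\|_0$ passes through unchanged, and that "special case" should be understood as equality of the objectives up to an $E$-independent constant.
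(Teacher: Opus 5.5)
Your proposal is correct and follows the paper's proof essentially step for step. Like you, the paper reads the INVASE term as $\mathbb{E}_X[D_{\text{KL}}(p(Y|X)\,\|\,p(Y|\tilde{X}))]$, rewrites it as $H(Y|\tilde{X})-H(Y|X)=I(X;Y)-I(Y;\tilde{X})$ using the Markov chain $\tilde{X}\leftarrow X\rightarrow Y$, and drops $I(X;Y)$ as an $E$-independent constant to obtain $\min_E -I(Y;\tilde{X})+\lambda\|E\|_0$, with ``special case'' meaning equality of objectives up to that constant, exactly as you state.
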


\begin{proof}
    All proofs are provided in the supplementary material.
\end{proof}

One could argue that minimizing $I(X;\tilde{X})$ plays a similar role to the sparsity constraint $\|E\|_0$. In fact, minimizing $I(X;\tilde{X})$ can be viewed as an implicit form of sparsity regularization; however, the two terms play complementary roles. Specifically, the $\ell_0$ constraint explicitly controls the number of selected features, whereas the IB compression term regulates the amount of information preserved in the selected representation. Since the explainer performs deterministic masking, we have $H(\tilde{X}\mid X)=0$, and therefore $I(X;\tilde{X})=H(\tilde{X})$. Hence, minimizing the IB compression term encourages compact, low-entropy representations rather than merely reducing the number of selected features. Generally, higher-dimensional variables have greater entropy because additional dimensions introduce more uncertainty into their joint distribution.


To illustrate the role of both regularization terms, let us consider a toy example in the 2D space, as shown in Fig.~\ref{fig: entropy}. Among the four features $\{x_1,\cdots,x_4\}$, both pairs \( (x_1, x_2) \) and \( (x_3, x_4) \) satisfy $\|E\|_0=2$ and effectively distinguish between the two classes. Consequently, both pairs have high and similar mutual information values \( I(Y; \tilde{X}) \), where $\tilde{X}$ represents $(x_1,x_2)$ or $(x_3,x_4)$. Hence, a regularization term $\|E\|_0$ alone is insufficient to select between \( (x_1, x_2) \) and \( (x_3, x_4) \). However, an entropy minimization penalty further drives our objective to favor \( (x_3, x_4) \) over \( (x_1, x_2) \)\footnote{Entropy is minimized if all points converge to a single point.}. Note that a compressed representation is always a desirable property in downstream applications. Our analysis in Section~\ref{sec: general} will further justify the rationale behind \(I(X; \tilde{X}) \).

Therefore, we argue that a mutual information regularization term \( I(X; \tilde{X}) \) is essential for effective instance-wise feature selection. Our objective in Eq.~(\ref{eq: obj1}) incorporates both terms, achieving effective and efficient feature selection.

Having illustrated the rationale behind each term in Eq.~(\ref{eq: obj1}), it is natural to extend it to the multimodal scenario, in which we aim to learn $\{E^m\}_{m=1}^M$ simultaneously:
\begin{equation}\label{eq: multi_loss}
\begin{split}
    \min_{\{E^m\}_{m=1}^M} & - I(Y; \overline{X}) + \beta\sum_{m = 1}^M I(\tilde{X}^m; {X}^m)  +\lambda \sum_{m=1}^M \|E^m\|_0,\\
     & \text{s.t.,}\:\:  {\overline{X}} = g_\omega({\tilde{X}}^1, \cdots, {\tilde{X}}^M), \:\: \tilde{x}_i^m = {e}_i^m \odot {x}_i^m,
    \end{split}
\end{equation}
where $g_\omega$ refers to a fusion network with parameter $\omega$.
\begin{figure}[t]
  \centering   \includegraphics[width=0.6\linewidth]{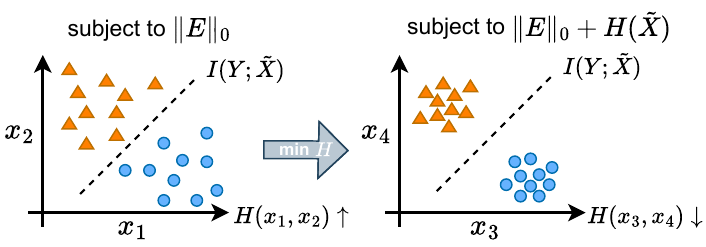}
   \caption{A 2D toy example illustrating how minimizing mutual information $I(X;\tilde{X})$ and entropy $H(\tilde{X})$ aids in feature selection.}
   \label{fig: entropy}
\end{figure}

However, in practical applications, especially in medical diagnosis, where data is often noisy, high-dimensional, and structured, the direct estimation of $I(\tilde{X}^m; {X}^m)$ is infeasible. Inspired by the invariance property of mutual information under reparameterization~\cite{tschannen2019mutual,arjovsky2019invariant}: if $f_1: \mathcal{X} \mapsto \mathcal{X}'$ and $f_2: \mathcal{Y} \mapsto \mathcal{Y}'$ are homeomorphisms (i.e., smooth invertible maps), then it holds that $I(\mathcal{X}; \mathcal{Y}) = I(\mathcal{X}'; \mathcal{Y}')$, we introduce an encoder ${h}_{\phi_m}: X^m \mapsto Z^m$ for each modality, and compute mutual information terms in the transformed, low-dimensional latent space. 
The lossless transformation assumption is also known as the sufficient encoder assumption in contrastive learning~\cite{tian2020makes}. In addition, we implement the $\ell_0$ sparsity regularizer in Eq. (4) as an explicit top-k cardinality constraint. Thus, Eq. (5) should be understood as a constrained implementation of Eq. (4), where the sparsity penalty controlled by $\lambda$ is replaced by the budget $k$:
\begin{equation}\label{eq: multi_loss2}
    \begin{split}
  &  \min_{\{E^m\}_{m=1}^M}  - I(Y; \overline{Z}) + \beta \sum_{m = 1}^M I(h_{\phi_m}(\tilde{X}^m); h_{\phi_m}(X^m)),\\ 
   &  \text{s.t.,}\:\:  {\overline{Z}} = g_\omega( h_{\phi_1}(\tilde{X}^1) , \cdots, h_{\phi_M}(\tilde{X}^M) ),
     \|e_i^m\|_0 \leq k,
    \end{split}
\end{equation}
where $h_{\phi_m}$ denotes a modality-specific encoder with parameters $\phi_m$.
The latent variables $Z^m = h_{\phi_m}(X^m)$ and $\tilde{Z}^m = h_{\phi_m}(\tilde{X}^m)$ facilitate a tractable estimation of mutual information terms. In principle, this strategy can be applied to any modality or data type, provided there is an appropriate encoder to transform complex data into a vector representation. $\|e_i^m\|_0 \leq k$ refers to as the $k$-sparse norm~\cite{makhzani2013k}.

In SMILE, we simply take $g_\omega$ with a concatenation operator, that is $\overline{Z} = [h_{\phi_1}(\tilde{X}^1), \cdots, h_{\phi_M}(\tilde{X}^M)]$.  Hence, our explainable predictive model $f$ consists of three modules, namely the explainer for generating explanations, the encoder for transforming data to latent spaces, and the classifier for predictive tasks, as shown in Fig.~\ref{fig: framework}.

\subsection{Modality-specific, Built-in Explainer}\label{sec: design}

Here, we introduce the design of our modality-specific built-in explainers. 
Our explainers operate directly on the input space of each modality and identify the most informative input elements contributing to the diagnostic decision. Specifically, the definition of an input element is modality-dependent: for image data, the elements can correspond to pixels or image patches; for graph-structured data, they correspond to nodes or edges that form an informative subgraph; and for 3D volumetric data, they correspond to voxels or subvolumes. Therefore, we assume that each modality input consists of $d$ elements, where $d$ is modality-dependent.

For an instance $x_i^m \in \mathbb{R}^d$, given a fixed size $k$ ($0 \le k\le d$), we can define ${\wp}_{k} = \{s_i^m\subset 2^d, |s_i^m| = k\}$, which represents the collection of all possible subsets of elements of size $k$ drawn from a set of $d$ elements. Our aim is to find a small subset $s_i^m \in {\wp}_{k}$ with $k\ll d$ that contains the most informative elements for predicting the model output. We thereby introduce a function $S$, mapping each element in $x_i^m$ to an importance score that indicates its likelihood of being part of the subset $s_i^m$. This importance score is learned through a neural network (i.e., selection network $S_{\psi_m}$ in Fig.~\ref{fig: framework}) parameterized by $\psi_m$, and the subset selection is based on these scores. The architecture of the selection network is modality-dependent and can be adapted to the characteristics of different data types, such as images, graphs, and structured clinical features (see details in Section \ref{sec: ex}).

However, a direct estimation requires summing over $\tbinom{d}{k}$ combinations of feature subsets, which is intractable. Therefore, we employ the Gumbel-Softmax~\cite{jang2016categorical,bang2021explaining} to overcome this problem in a differentiable manner. Suppose we aim to approximate a categorical random variable represented as a one-hot vector in $\mathbb{R}^d$ with category probabilities $ p_1,p_2,\dots, p_d$, we start by adding a random perturbation to the log probability of each category $\log{p_i}$:
\begin{equation}\label{eq: perturbation}
\begin{split}
    G_i = &-\log (-\log u_i) \quad\text{where}\:\: u_i \sim\text{Uniform(0,1)},\\
    &C_i = \frac{\text{exp}\{(G_i + \log{p_i})/\tau\}}{\sum_{i = 1}^d\text{exp}\{(G_i + \log{p_i})/\tau\}},
\end{split}
\end{equation}
where $\tau$ is a tuning parameter for the temperature of the Gumbel-Softmax distribution. Then, we can define a Concrete random vector $C = [C_1,\cdots, C_d]$, which serves as a continuous, differentiable approximation of a categorical random variable represented as a one-hot vector in $\mathbb{R}^d$.

We further define a continuous-relaxed random variable $C^* = [C_1^*, \cdots, C_d^*]$ as the element-wise maximum of the independently sampled Concrete vectors $C^{(j)}$ where $j = 1,\cdots,k$:
\begin{equation}\label{eq: k-hot}
    C_i^* = \max_{j} C_i^{(j)}\:\: \text{i.i.d} \:\: \text{for}\:\: j = 1,...,k.
\end{equation}
We denote this sampling result $C^*$ as the generated explanation $e_i^m \in \mathbb{R}^d$, where top-$k$ elements are most informative for the prediction of the model.

\subsection{Mutual Information Estimation}\label{sec: MI}
We now describe the estimation of mutual information. For the estimation of $I(Y; \overline{Z})$, we can use a standard cross entropy loss~\cite{alemi2016deep}. This is because, letting $Q(Y|\overline{Z})$ be a variational approximation to $P(Y|\overline{Z})$, we have:
\begin{equation}\label{eq: cross_entropy}
    I(Y; \overline{Z}) \ge H(Y) + \mathbb{E}_{P(Y,\overline{Z})}[\log Q(Y|\overline{Z})],
\end{equation}
where $H(Y)$ is independent of the optimization and thereby can be ignored. Maximizing $\mathbb{E}_{P(Y,\overline{Z})}[\log Q(Y|\overline{Z})]$ is equivalent to minimizing the usual cross-entropy loss~\cite{kolchinsky2019nonlinear}.

For the term $I(\tilde{Z}^m; {Z}^m)$, which can be decomposed as:
\begin{equation}\label{eq: I2H}
    I(\tilde{Z}^m; {Z}^m) = H(\tilde{Z}^m) + H(Z^m) - H(\tilde{Z}^m, {Z}^m).
\end{equation}

We employ the matrix-based R\'enyi's $\alpha$-order entropy functional~\cite{giraldo2014measures,yu2019multivariate} to quantify each entropy term in (\ref{eq: I2H}) in a non-parametric manner. Unlike existing parametric estimators such as MINE~\cite{belghazi2018mutual} or InfoNCE~\cite{oord2018representation}, this approach directly estimates entropy from data using positive-definite matrices and avoids introducing an auxiliary parametric model, which complicates training. 


For the $m$-th modality, we denote the set of latent feature vectors from $N$ instances as $\{z_i^m\}_{i=1}^{N}$, where $z_i^m=h_{\phi_m}(x_i^m)\in \mathcal{Z}^m$. The Gram matrix $K \in \mathbb{R}^{N\times N}$ can be computed as $K_{ij} = \kappa({z_i^m}, {z_j^m})$ by using a positive definite kernel $\kappa$. Here, we use the radial basis function (RBF) kernel $\kappa({z_i^m}, {z_j^m}) = \text{exp}(-\frac{{\lVert{z_i^m - z_j^m}\rVert}^2}{2{\sigma}^2})$. The normalized Gram matrix can be defined as $A = K/\text{tr}(K)$, a matrix-based analogue to  Rényi's $\alpha$-order entropy is subsequently defined as:
\begin{equation}\label{eq: entropy}
\begin{split}
    H_{\alpha}(A) 
    = \frac{1}{1-\alpha}\text{log}_2{\left( \sum_{i = 1}^N \lambda_i(A)^{\alpha}\right)}, \quad \alpha \in (0,1) \cup (1, \infty)
\end{split}
\end{equation}
which estimates $H(Z^m)$, and $\lambda_i(A)$ denotes the $i$-th eigenvalue of $A$. With the explainable representation $\tilde{\boldsymbol{z}}^m$, we can apply the same process and obtain $H_{\alpha}(\tilde{A})$, where $\tilde{A}$ is the normalized Gram matrix derived from $\{\tilde{z}_i^m\}_{i = 1}^N \in \tilde{Z}^m$. Additionally, the joint entropy can be defined similarly using the Hadamard product $\odot$:
\begin{equation}\label{eq: jointE}
    H_{\alpha}(A, \tilde{A}) = H_{\alpha}\left(\frac{A\odot \tilde{A}}{\text{tr}(A\odot \tilde{A})}\right),
\end{equation}
which estimates $H(\tilde{Z}^m, {Z}^m)$. Given Eqs.~(\ref{eq: entropy}) and (\ref{eq: jointE}), we can estimate $I(\tilde{Z}^m; {Z}^m)$ with Eq.~(\ref{eq: I2H}).

\subsection{Generalization Analysis}\label{sec: general}

\begin{figure}[t]
  \centering   \includegraphics[width=0.5\linewidth]{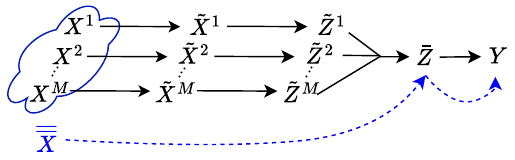}
   \caption{Flow of information in SMILE.}
   \label{fig:flow}
\end{figure}

We finally present a generalization analysis of our SMILE. The information flow within SMILE is illustrated in Fig.~\ref{fig:flow}. Specifically, SMILE consists of $M$ channels, where each explanation $\tilde{X}^m$ and its latent representation $\tilde{Z}^m$ depend solely on $X^m$. SMILE then fuses information from the set $\{\tilde{Z}^m\}_{i=1}^M$ to form $\bar{Z}$, which is then used for prediction. For simplicity, we can also treat $\{X^m\}_{i=1}^M$ as a joint information source, denoted by $\overline{\overline{X}}$, which transmits information through a parameterized channel to $\bar{Z}$ and subsequently to~$Y$. Extending the theoretical result in~\cite{kawaguchi2023does}, we show that the compression term $\sum_{m=1}^M I(\tilde{X}^m;X^m)$ introduced in (\ref{eq: multi_loss}) further reduces the generalization error, as demonstrated in Proposition~\ref{proposition_generalization}.

\begin{proposition}
\label{proposition_generalization}
With probability at least $1-\delta$ over the training data $t= \{ \overline{\overline{x}}_i,  y_i\}_{i=1}^N $ drawn from a data distribution $p(\overline{\overline{x}},y)$, where $ \overline{\overline{x}}_i = \{x_i^m\}_{m=1}^M$, the generalization error
    $\Delta(t) = \mathbb{E}_{p(\overline{\overline{x}},y)} \left[\ell(f^t(\overline{\overline{x}},y)) \right] - \frac{1}{N}\sum_{i=1}^N  \ell(f^t(\overline{\overline{x}}_i,y_i)) $ roughly obeys the following form:
\begin{equation}
    \Delta(t) = \tilde{\mathcal{O}} \left( \sqrt{ \frac{\sum_{m}^M I(\tilde{X}^m;X^m) +1}{N} } \right) \:\: \text{as} \:\: N \rightarrow \infty,
\end{equation}
where $\ell$ is a bounded per-sample loss, $f^t$ is the full model obtained by training over $t$.
\end{proposition}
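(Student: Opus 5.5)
The plan is to reduce the statement to the main theorem of \cite{kawaguchi2023does}. That result bounds the generalization gap of a model $f^t = \phi^t_{\mathrm{cls}}\circ\phi^t_{\mathrm{enc}}$, with $Z=\phi^t_{\mathrm{enc}}(X)$, as follows. With probability at least $1-\delta$,
\[
\Delta(t)\le \tilde{\mathcal{O}}\Bigl(\sqrt{\tfrac{I(X;Z\mid t)+I(\phi^t_{\mathrm{enc}};t)+1}{N}}\Bigr),
\]
and the model-dependent term $I(\phi^t_{\mathrm{enc}};t)$ is of lower order as $N\to\infty$. Throughout, $\ell$ is bounded.

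To apply this, I would view SMILE as a single encoder from the joint source $\overline{\overline{X}}$ to $\overline{Z}$, following the information flow of Fig.~\ref{fig:flow}. The encoder is the composition of the explainers $X^m\mapsto\tilde X^m=E^m\odot X^m$, the modality encoders $h_{\phi_m}$, and the concatenation $g_\omega$. The classifier $Q(Y\mid\overline{Z})$ plays the role of $\phi_{\mathrm{cls}}$. The result of \cite{kawaguchi2023does} then gives the bound with $I(\overline{\overline{X}};\overline{Z})$ in place of $I(X;Z)$, and it remains to bound this term by $\sum_m I(\tilde X^m;X^m)$.

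That bound comes in three steps.
\begin{enumerate}
\item Data processing along $\overline{\overline{X}}\to(\tilde X^1,\dots,\tilde X^M)\to(\tilde Z^1,\dots,\tilde Z^M)\to\overline{Z}$ gives
\[
I(\overline{\overline{X}};\overline{Z})\le I(\overline{\overline{X}};\tilde X^1,\dots,\tilde X^M).
\]
\item Each $\tilde X^m$ is a deterministic function of $X^m$ alone, so $H(\tilde X^{1:M}\mid\overline{\overline{X}})=0$. Therefore
\[
I(\overline{\overline{X}};\tilde X^{1:M})=H(\tilde X^{1:M})\le\sum_m H(\tilde X^m)=\sum_m I(\tilde X^m;X^m),
\]
using subadditivity of entropy and the identity $I(X;\tilde X)=H(\tilde X)$ already noted in the paper. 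If the Concrete relaxation makes the masks stochastic, I would instead use conditional independence of the channels: $\tilde X^m$ depends only on $X^m$ given the data, so
\[
H(\tilde X^{1:M}\mid\overline{\overline{X}})=\sum_m H(\tilde X^m\mid X^m).
\]
Combined with subadditivity of $H(\tilde X^{1:M})$, this gives the same sum.
\item Substituting this into the bound yields the stated rate, since the "$+1$" and the logarithmic factors in $\delta$ and $N$ are absorbed into $\tilde{\mathcal{O}}$.
\end{enumerate}

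The main obstacle is making the reduction rigorous. The theorem of \cite{kawaguchi2023does} is stated conditional on the training set $t$, so the information terms are really $I(\cdot;\cdot\mid t)$ for the trained model. I would handle this by carrying the conditioning on $t$ through every step above; each step holds pointwise given $t$, because the explainers and encoders are fixed once trained.

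I would also need to argue that the term $I(\phi^t_{\mathrm{enc}};t)$ for the composite encoder is $o(1)$ relative to the leading term as $N\to\infty$, as assumed there. This is why the statement is phrased as "roughly obeys." Finally, the homeomorphism and sufficient-encoder assumption lets me pass between $I(\tilde X^m;X^m)$ and the latent quantities $I(\tilde Z^m;Z^m)$ that are actually optimized.
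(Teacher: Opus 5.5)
Your proposal is correct and matches the paper's proof: you apply the bound of \cite{kawaguchi2023does} to the joint source $\overline{\overline{X}}$ and the fused representation, then bound the joint mutual information by $\sum_m I(\tilde X^m;X^m)$ using data processing and subadditivity of mutual information over decoupled (product) channels. The differences are cosmetic: you apply data processing straight down to $\tilde X^{1:M}$ and prove the product-channel inequality inline via entropies, whereas the paper goes through the latent $\overline{\overline{Z}}$, cites that inequality, and applies data processing per modality at the end; you also keep explicit the $I(\phi^t_{\mathrm{enc}};t)$ term, which the paper absorbs into ``roughly.''
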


It is widely believed that as model complexity increases to improve performance, interpretability often declines, leading to a trade-off where more interpretable models may sacrifice some predictive accuracy~\cite{minh2022explainable}. However, our theoretical analysis suggests that the IB-based compression mechanism does not necessarily introduce an intrinsic accuracy penalty, which is consistent with recent perspectives~\cite{rudin2022interpretable,hou2024self}. 
We provide empirical evidence to support this argument in Section~\ref{sec: ex}.

\section{Experiments and Results}\label{sec: ex}

\subsection{Experimental Setup}

\textbf{Datasets.} To demonstrate the effectiveness of SMILE, we evaluate it on five representative datasets that cover commonly used medical multimodal settings: 
\romannumeral1) \textbf{BRCA}~\cite{wang2021mogonet} (breast carcinoma PAM50 subtypes) and \textbf{ROSMAP}~\cite{wang2021mogonet} (Alzheimer’s Disease), where each subject is characterized by three structured-data modalities, i.e., mRNA expression, DNA methylation, and miRNA expression; 
\romannumeral2) \textbf{iCTCF}~\cite{ning2020open} (COVID-19 morbidity), which contains paired HRCT scans and clinical features; following~\cite{fang2024dynamic}, each 3D HRCT volume is preprocessed into a $700\times700$ 2D montage for model input; 
\romannumeral3) \textbf{Glaucoma Grading}~\cite{wu2023gamma}, a dual-image-modality dataset consisting of fundus images (resized to $256\times256$) and OCT images (resized to $512\times512$) for glaucoma severity grading; 
\romannumeral4) \textbf{REST-meta-MDD}~\cite{yan2019reduced}, a large-scale multi-site resting-state consortium for major depressive disorder (MDD), aggregated from 25 cohorts. For sMRI, we use $z$-normalized gray matter volume (GMV) maps of size ($121\times145\times121$). For rs-fMRI, we construct functional connectivity matrices by computing pairwise Pearson correlations between the time series of 116 brain regions defined by the Automated Anatomical Labeling (AAL) atlas~\cite{tzourio2002automated}, followed by Fisher-$z$ transformation. Each subject is therefore represented by a $116\times116$ functional connectivity matrix. After quality control (QC), 1,604 subjects are retained. See Table~\ref{tab: datasets} for characteristics of datasets.


\textbf{Implementation Details.} 
In the explainer module, we employ modality-specific selection networks to accommodate the heterogeneous characteristics of different data modalities. The detailed architectures and implementation settings of these selection networks are provided in the supplementary material. For image data, the explainer operates at the patch level rather than individual pixels and employs convolutional layers to capture local spatial patterns. For structured data, such as clinical variables and genomic features, fully connected networks are used to estimate feature importance. For graph-structured data, the explainer evaluates the importance of graph elements (i.e., edges) using modality-specific networks; in our implementation, the edge importance is estimated by an MLP based on the features of the corresponding node pairs.
In the fusion stage, we use simple concatenation to fuse latent representations from each modality. 


For training the proposed method on \textbf{BRCA} and \textbf{ROSMAP}, we follow the experimental settings in~\cite{han2022multimodal} and set $k=30$. 
For \textbf{iCTCF}, we adopt the settings from~\cite{fang2024dynamic} and use $k=30$ patches of size $70\times70$ for the image modality and $k=20$ for clinical features. 
For \textbf{Glaucoma Grading}, we use the experimental setup in~\cite{fang2021multi} and set $k=100$, using $8\times8$ patches for fundus images and $4\times4$ slices for OCT images. For \textbf{REST-meta-MDD}, we use soft top-$k$ selection with $k=25$ edges (rs-fMRI; $\sim$1\% of admissible pairs) and $k=25$ subvolumes (sMRI; $\sim$5\% of a $7\times9\times7$ grid); in all cases, $k$ is chosen empirically as the smallest value before performance drops. All experiments are performed on a single H100 GPU (80GB, cuDNN 9.3), and each experiment is repeated 10 times to report the mean and standard deviation.

\begin{table*}[t!]
\centering
\large
\caption{Dataset characteristics. Note that we use a prognosis task for \textbf{iCTCF} rather than a diagnostic task to identify infection, to better showcase explanations.}
\resizebox{\linewidth}{!}{%
\renewcommand{\arraystretch}{1.1}
\begin{tabular}{llll}
\toprule
Dataset          & Modality                         & Task                                         & Number of labels and patients                               \\
\midrule
BRCA             & mRNA, DNA methylation, miRNA     & Diagnosis for breast carcinoma PAM50 subtype & Normal: 115 / Basal: 131 / Her2: 46 / LumA: 436 / LumB: 147 \\
ROSMAP           & mRNA, DNA methylation, miRNA     & Diagnosis for Alzheimer’s Disease            & Normal: 169 / AD: 182                                       \\
iCTCF            & HRCT scans, 81 clinical features & Prognosis for COVID-19 morbidity outcome      & Severe symptoms: 202 / Mild symptoms: 549                   \\
Glaucoma Grading & 2D fundus images, 3D OCT scans   & Diagnosis for Glaucoma Grad                  & Non: 50 / Early: 26 / Mid advanced: 24 \\
REST-meta-MDD &rs-fMRI graphs, 3D sMRI graphs& Diagnosis for Major Depressive Disorder& Major Depressive Disorder: 1300 / Healthy Controls: 1128 (830 / 771 after QC)\\
\bottomrule
\end{tabular}
}
\label{tab: datasets}
\end{table*}

\subsection{Quantitative Analysis}
\noindent\textbf{BRCA \& ROSMAP.}
For these multi-omics datasets, we compare against five representative
integration methods covering the main families: classical statistical
integration (DIABLO~\cite{singh2019diablo}), graph-based deep integration (MOGONET~\cite{wang2021mogonet}), uncertainty-aware fusion (TMC~\cite{han2022trusted}, Dynamic~\cite{han2022multimodal}, and DMIB~\cite{fang2024dynamic}, the closest competitor as it is also IB-based but lacks any explanation mechanism. As shown in Table~\ref{tab: BRCA}, SMILE achieves the best ACC, WeightedF1, and MacroF1 on BRCA ($87.3\%$, $87.8\%$, $84.6\%$); the largest margin appears on MacroF1, indicating that the selected features remain discriminative for minority PAM50 subtypes. On the binary ROSMAP task, SMILE leads in ACC ($85.1\%$) and F1 ($85.6\%$), while DMIB retains the best AUC ($91.6\%$ vs.\ $90.3\%$). SMILE thus matches or exceeds the strongest IB-based competitor while additionally providing instance-wise explanations.

\begin{table*}[thp]
\centering
\tiny
\caption{Performance of various multimodal methods on \textbf{BRCA} and \textbf{ROSMAP} datasets. The best results are in bold, and the second-best results are underlined.}
\resizebox{0.8\linewidth}{!}{%
\renewcommand{\arraystretch}{1.1}
\begin{tabular}{c|ccc|ccc}
\toprule
        & \multicolumn{3}{c}{BRCA}         & \multicolumn{3}{|c}{ROSMAP}     \\
\midrule
Method  & ACC      & WeightedF1 & MacroF1  & ACC      & F1       & AUC      \\
\midrule
DIABLO~\cite{singh2019diablo}  & 64.2$\pm$0.9 & 53.4$\pm$1.7   & 36.9$\pm$1.7 & 74.2$\pm$2.4 & 75.5$\pm$2.5 & 83.0$\pm$2.5 \\
MOGONET~\cite{wang2021mogonet} & 82.9$\pm$1.8 & 82.5$\pm$1.7   & 77.4$\pm$1.7 & 81.5$\pm$2.3 & 82.1$\pm$1.2 & 87.4$\pm$1.2 \\
TMC~\cite{han2022trusted}     & 84.2$\pm$0.5 & 84.4$\pm$0.9   & 80.6$\pm$0.9 & 82.5$\pm$0.9 & 82.3$\pm$0.6 & 88.5$\pm$0.6 \\
Dynamic~\cite{han2022multimodal} & \underline{87.1$\pm$0.5} & \underline{87.4$\pm$0.6}   & \underline{83.5$\pm$0.5} & 81.7$\pm$1.5 & 82.3$\pm$1.5 & 90.0$\pm$1.2 \\
DMIB~\cite{fang2024dynamic}   & 86.0$\pm$0.7 & 86.0$\pm$0.8   & 81.6$\pm$0.9 & \underline{84.9$\pm$1.8} & \underline{85.3$\pm$1.7} & \textbf{91.6$\pm$0.7} \\
SMILE \textit{(Ours)}    &    \textbf{87.3$\pm$0.2}      & \textbf{87.8$\pm$0.3}           &   \textbf{84.6$\pm$0.3}       &   \textbf{85.1$\pm$1.1}       &  \textbf{85.6$\pm$1.3}        &   \underline{90.3$\pm$0.4}      \\
\bottomrule
\end{tabular}
}
\label{tab: BRCA}
\end{table*}

\begin{table}[thp]
\centering
\caption{Performance of various multimodal methods on \textbf{iCTCF} dataset. SMILE results are reported as the mean and standard deviation across 10 runs of 5-fold cross-validation.}
\resizebox{0.7\linewidth}{!}{%
\renewcommand{\arraystretch}{1.1}
\begin{tabular}{c|cccc}
\toprule
               & ACC  & AUC  & JW$_{0.5}$   & JW$_{0.6}$   \\
\midrule
De-COVID19-Net~\cite{meng2020deep} & 72.4 & 77.3 & 69.8 & 69.2 \\
HoFN+SCResNet~\cite{zhou2021cohesive}  & 68.4 & 80.2 & 70.1 & 71.6 \\
HUST-19~\cite{ning2020open}&\underline{83.3}&\underline{92.1}& --& --\\
DMIB~\cite{fang2024dynamic}           & 73.2 & 82.3 & 72.1 & 71.7 \\
SMILE \textit{(ours)}   & \textbf{92.4$\pm$0.5} & \textbf{95.6$\pm$0.6} & \textbf{84.3$\pm$0.5} & \textbf{82.2$\pm$0.3}\\
\bottomrule
\end{tabular}
}
\label{tab: iCTCF}
\end{table}

\begin{table}[thp]
\centering
\caption{Performance comparison of various multimodal methods on the \textbf{REST-meta-MDD} dataset (\%). Best results are highlighted in bold. Our proposed SMILE achieves compelling performance in terms of all four metrics.}
\resizebox{0.7\linewidth}{!}{%
\renewcommand{\arraystretch}{1.1}
\begin{tabular}{c|cccc}
\toprule
               & ACC  & F1-score  & AUC   & MCC   \\
\midrule
DER~\cite{amini2020deep} &56.3	&41.3	&66.8	&17.4 \\
MoNIG~\cite{ma2021trustworthy} &\underline{63.6}	&64.8	&67.9&\underline{27.0} \\
MIB~\cite{mai2022multimodal} &55.9	&70.1	&77.7	&21.3 \\
MEIB~\cite{zhang2022multi}	&52.2	&\underline{68.5}	&\textbf{82.2}	&6.7 \\
SMILE \textit{(ours)}   & \textbf{67.4$\pm$2.3} & \textbf{69.7$\pm$2.6}   &\underline{73.5$\pm$3.6}& \textbf{35.0$\pm$4.6} \\
\bottomrule
\end{tabular}
}
\label{tab: REST-meta-MDD}             
\end{table}

\noindent\textbf{iCTCF.}
As this dataset pairs imaging with clinical features and has dedicated models built on it, we compare against three COVID-19-specific prognosis models (De-COVID19 Net~\cite{meng2020deep}, HoFN+SCResNet~\cite{zhou2021cohesive}, and HUST-19~\cite{ning2020open}, released with the dataset) plus the general-purpose DMIB~\cite{fang2024dynamic}, and follow~\cite{fang2024dynamic} in adopting ACC, AUC and the weighted Youden indices $J_{W0.5}$, $J_{W0.6}$, which reflect the asymmetric clinical cost of missing severe cases. Table~\ref{tab: iCTCF} shows that SMILE raises ACC from $83.3\%$ to $92.4\%$ ($9.1$ percentage points over HUST-19) and attains the highest AUC ($95.6\%$); among methods reporting the Youden indices, it also obtains the best $J_{W0.5}$ ($84.3\%$) and $J_{W0.6}$ ($82.2\%$), confirming that the gain is not driven by the majority class.

\begin{table}[t!]
\centering
\caption{Performance of different strategies on \textbf{Glaucoma Grading} dataset. To compare directly with pre-trained models used for the classification task~\cite{wu2023gamma}, we use them as encoders.}
\resizebox{0.7\linewidth}{!}{%
\renewcommand{\arraystretch}{1.1}
\begin{tabular}{c|cc|cc}
\toprule
      & DuelRes      & DuelRes (SMILE) & Res-DEN      & Res-DEN (SMILE) \\
      \midrule
Kappa & 65.4$\pm$1.2 & \textbf{69.2$\pm$0.8}    & 70.1$\pm$0.5 & \textbf{72.4$\pm$0.5}   \\
\bottomrule
\end{tabular}
}
\label{tab: Glaucoma}
\end{table}

\noindent\textbf{Glaucoma Grading.} To compare with baseline models~\cite{wu2023gamma}, we replace encoders with pre-trained models (excluding the classifier layer) and keep Cohen’s Kappa as the evaluation metric. DuelRes with SMILE achieves a notable improvement with a Kappa score of 69.2\% compared to 65.4\% and presents better stability. Similarly, Res-DEN with SMILE reaches a Kappa score of 72.4\% compared to 70.1\% (see Table~\ref{tab: Glaucoma}). This demonstrates that involving the pre-trained models in our framework consistently improves performance.

\noindent\textbf{REST-meta-MDD.} Table~\ref{tab: REST-meta-MDD} compares SMILE against
four multimodal baselines on the REST-meta-MDD dataset. SMILE achieves the highest accuracy (67.4\%), exceeding the strongest baseline in accuracy, MoNIG~\cite{ma2021trustworthy}, by 3.8 percentage points, and the largest margin appears on MCC (35.0\% vs.\ 27.0\%), the metric least affected by majority-class bias. Its F1-score (69.7\%) is on par with the best baseline
(MIB, 70.1\%). Two baselines attain higher AUC, yet this does not translate into usable decisions: MEIB~\cite{zhang2022multi} reaches the highest AUC (82.2\%) but collapses to 52.2\% accuracy with near-chance MCC (6.7\%), and MIB~\cite{mai2022multimodal} likewise pairs a high AUC (77.7\%) with 55.9\% accuracy and an MCC of 21.3\%. Similarly, DER~\cite{amini2020deep} shows a wide gap between accuracy and F1-score (56.3\% vs.\ 41.3\%). These results suggest that SMILE provides a more balanced operating point under the default decision threshold.


\subsection{Qualitative Analysis}
The most important innovation of SMILE is its ability to generate modality-specific explanations. We present qualitative examples to show its effectiveness.

For genomic data, we visualize the top-5 biomarkers identified by SMILE across three modalities of \textbf{BRCA}, which is shown at the top of Fig.~\ref{fig: gene_explanation}. Meanwhile, we reference some representative studies to illustrate the relevance of these biomarkers to breast cancer progression. Specifically, for mRNA expression in Fig.~\ref{fig: RNA}, in HER2‐positive breast cancers, one of the most aggressive subtypes of breast cancer, \textit{KCTD10} can induce RhoB degradation and activation of Rac1~\cite{angrisani2021emerging}. \textit{TEX10} is illustrated in~\cite{wang2020rnai}, which finds that its interaction with the NF-$\kappa$B pathway promotes tumor progression by enhancing the expression of key genes involved in inflammation and survival. \cite{lee2024phgdh} claims that the degradation of \textit{PHGDH} by ring finger protein 5 inhibited breast cancer cell growth. \textit{TP53INP1} is an independent risk factor for overall survival in breast cancer patients~\cite{nishimoto2019prognostic}. For DNA methylation expression in Fig.~\ref{fig: DNA}, \textit{ZCCHC11}~\cite{zhang2022terminal}, also known as TUT4, has been verified to be associated with breast cancer. \textit{TEX19}~\cite{liu2024tex19}, \textit{SNORD82}~\cite{karkkainen2022expression} and \textit{FZD7}~\cite{yang2011fzd7} are relevant to breast cancer and its diagnosis. Similar to miRNA expression in Fig.~\ref{fig: miRNA}, \textit{miR-520b}~\cite{lu2017mir}, \textit{miR-874}~\cite{aersilan2022microrna} and \textit{miR-665}~\cite{zhao2019mir}, all of these identified gene sequences are associated with breast cancer.

In the \textbf{ROSMAP} dataset, the top-5 biomarkers identified by SMILE across three modalities are shown at the bottom of Fig.~\ref{fig: gene_explanation}. These biomarkers are supported by leading medical journals linking them to Alzheimer’s Disease (AD). For mRNA in Fig.~\ref{fig: RNA}, \textit{QDPR} has been observed to be defective in the AD brain because it catalyzes the conversion of BH4 away from these neurotransmitters~\cite{xu2019regional, mathys2019single}. 
\cite{canchi2019integrating,perez2021rtp801}
indicate that \textit{DDIT4} is differentially expressed in the prefrontal cortex of AD patients. \textit{SLC5A11} is included in the top 5 cluster-enriched transcripts per astrocytes and oligodendrocytes which change in AD~\cite{sadick2022astrocytes}, while \textit{MPV17L2} is also recognized as a high-confidence AD-associated gene~\cite{zhang2024g}. For DNA methylation in Fig.~\ref{fig: DNA}, the dataset contains Cytosine-phosphate-Guanine (GpG) sites (e.g., \textit{cg22472290}). We are able to find the relevance of their associated genes to the task, such as \textit{cg22472290} in \textit{2NF577}~\cite{lardenoije2019alzheimer}, \textit{cg17253459} in \textit{OLFML3}~\cite{drummond2022amyloid}. 
The identified miRNAs in Fig.~\ref{fig: miRNA}, including \textit{miR-199a-5p}~\cite{song2020mir}, \textit{miR-346}~\cite{long2019novel}, \textit{miR-484}~\cite{jia2022mir}, \textit{miR-129-3p}~\cite{patrick2017dissecting} and \textit{miR-133a}~\cite{chum2022cerebrovascular}, have all been associated with breast cancer.


\begin{figure}[t!]
  \centering
  \subfloat[\label{fig: RNA}]{
  \begin{minipage}[b]{0.32\linewidth}
    \centering
    \includegraphics[width=\linewidth]{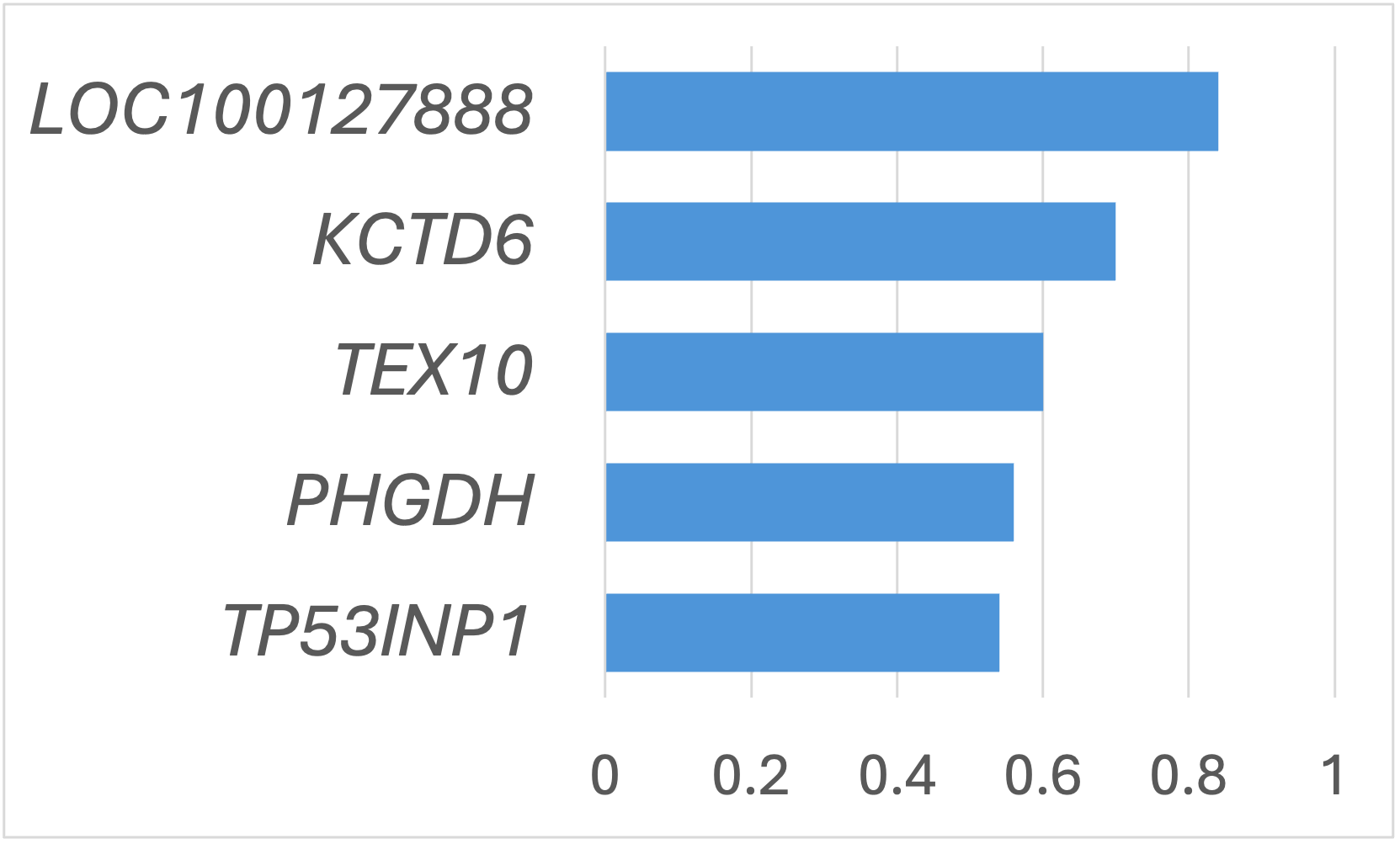}\\[2mm]
    \includegraphics[width=\linewidth]{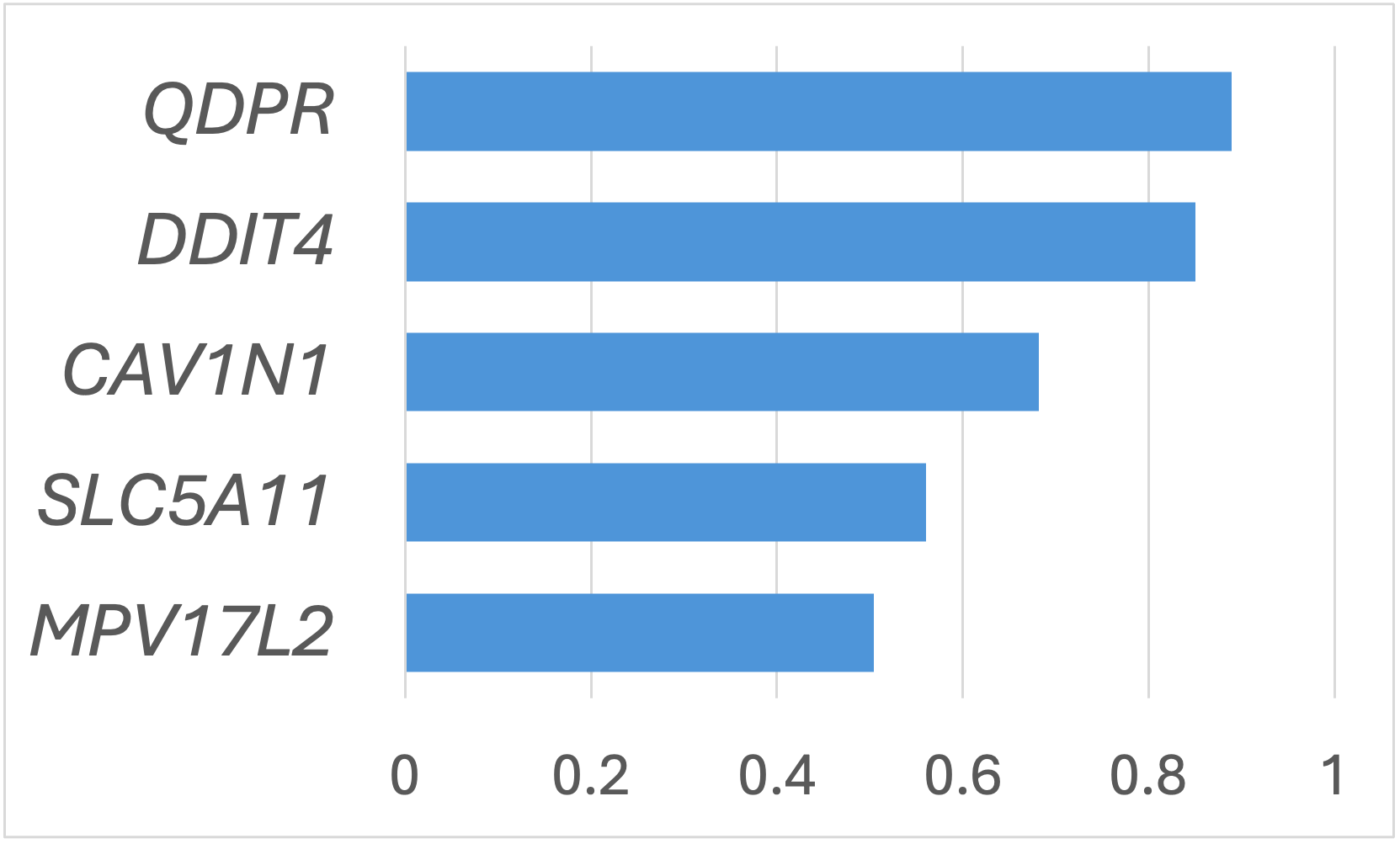}
  \end{minipage}}
  \subfloat[\label{fig: DNA}]{
   \begin{minipage}[b]{0.32\linewidth}
    \centering
    \includegraphics[width=\linewidth]{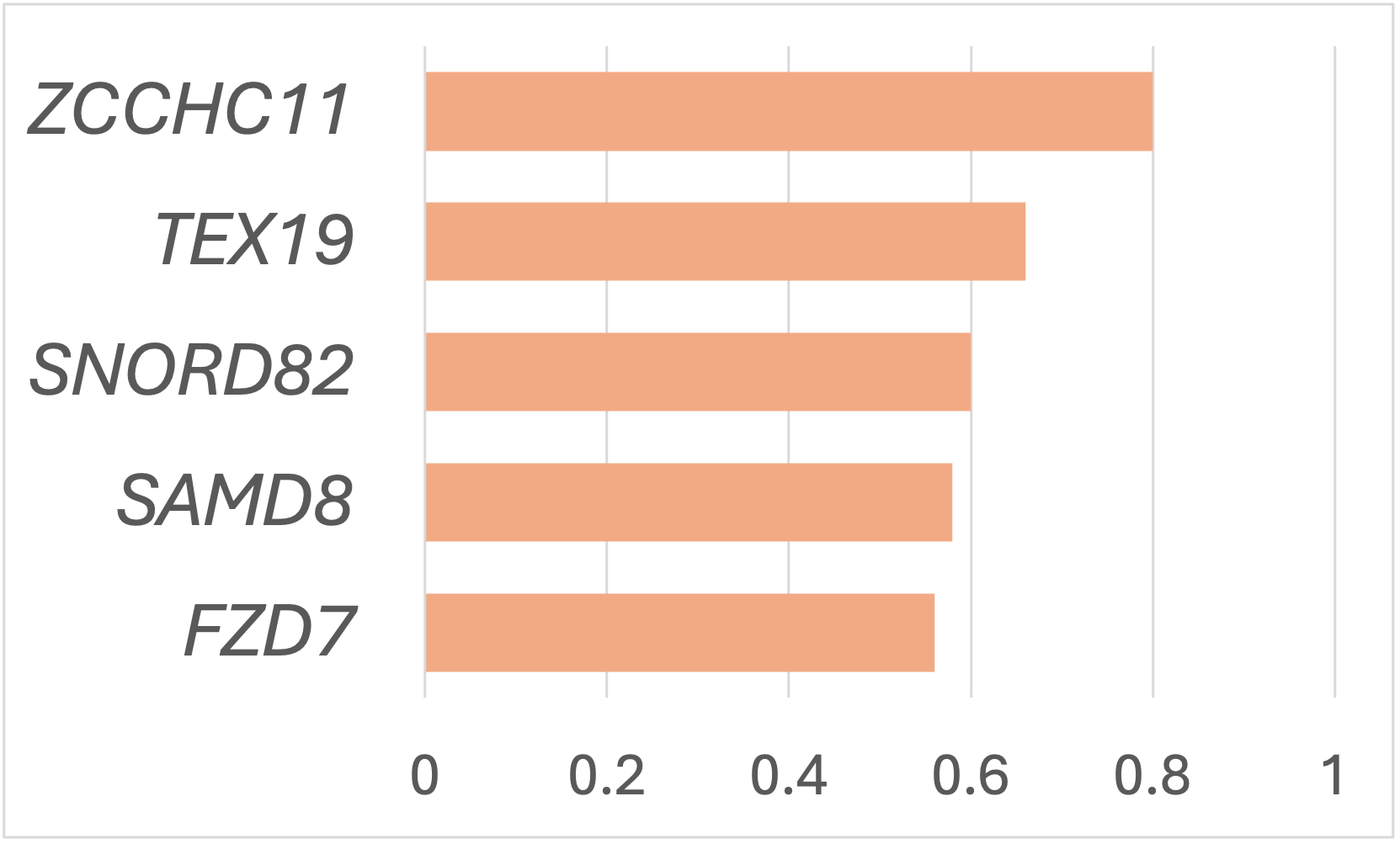}\\[2mm]
    \includegraphics[width=\linewidth]{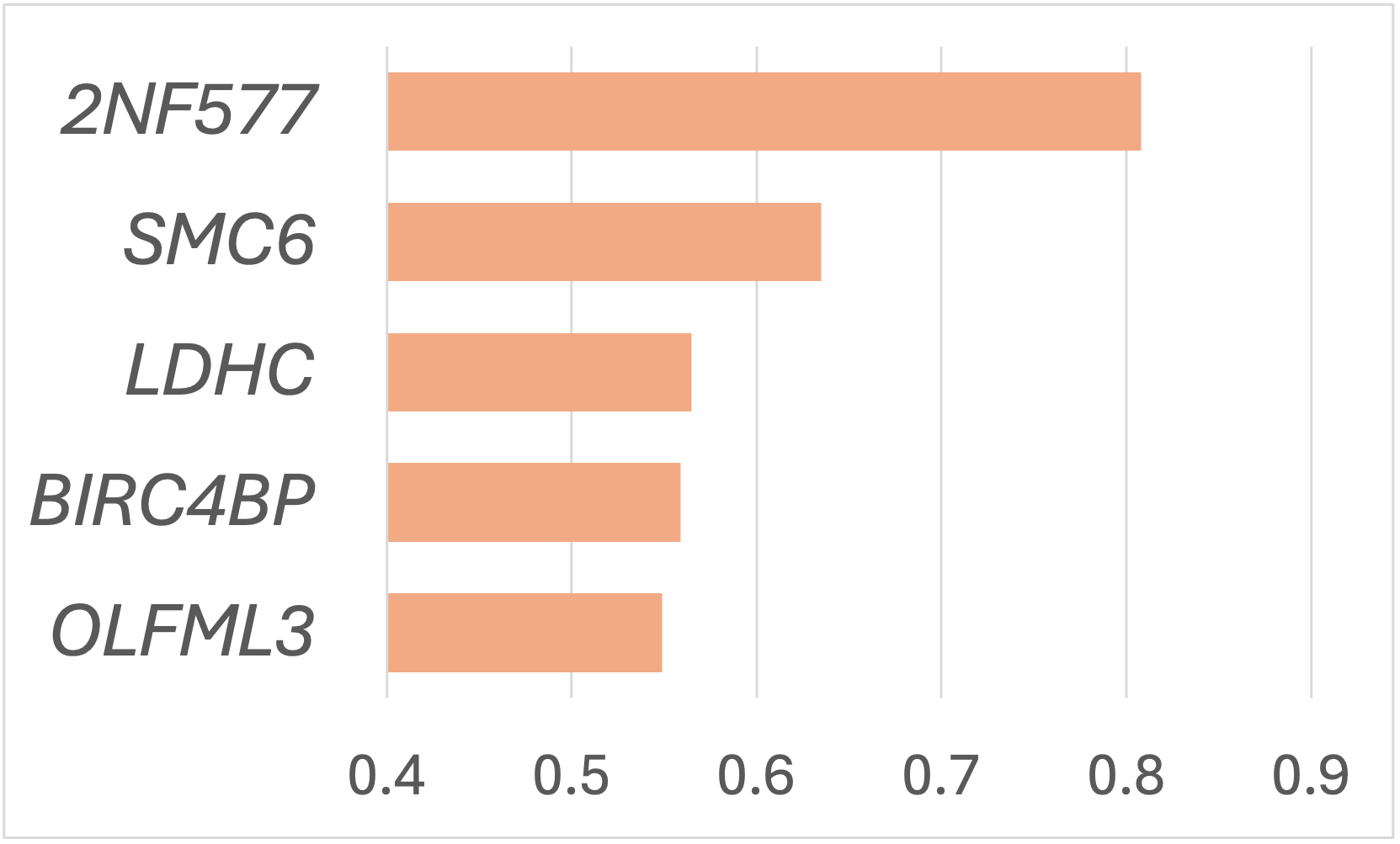}
  \end{minipage}}
  \subfloat[\label{fig: miRNA}]{
 \begin{minipage}[b]{0.32\linewidth}
    \centering
    \includegraphics[width=\linewidth]{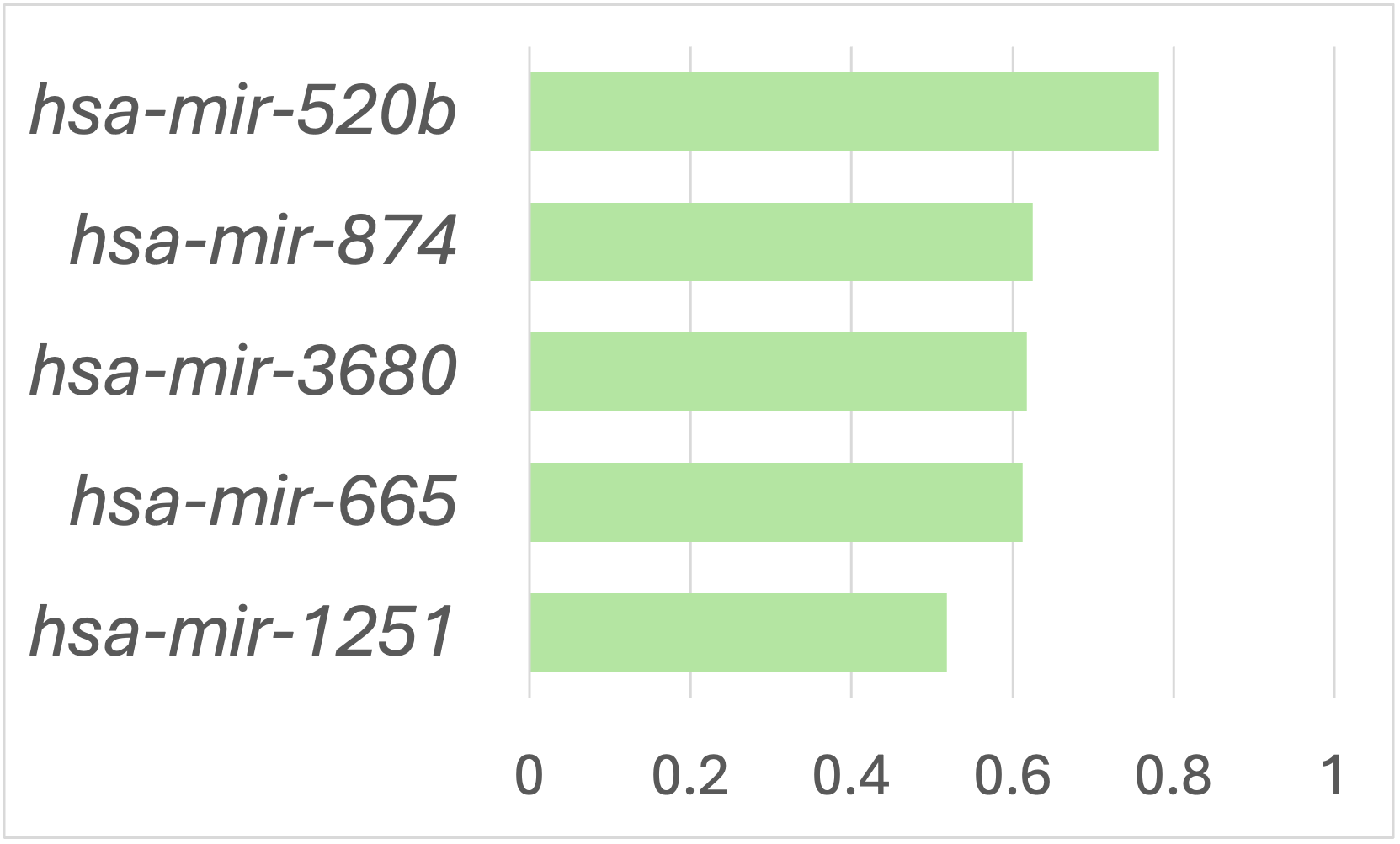}\\[2mm]
    \includegraphics[width=\linewidth]{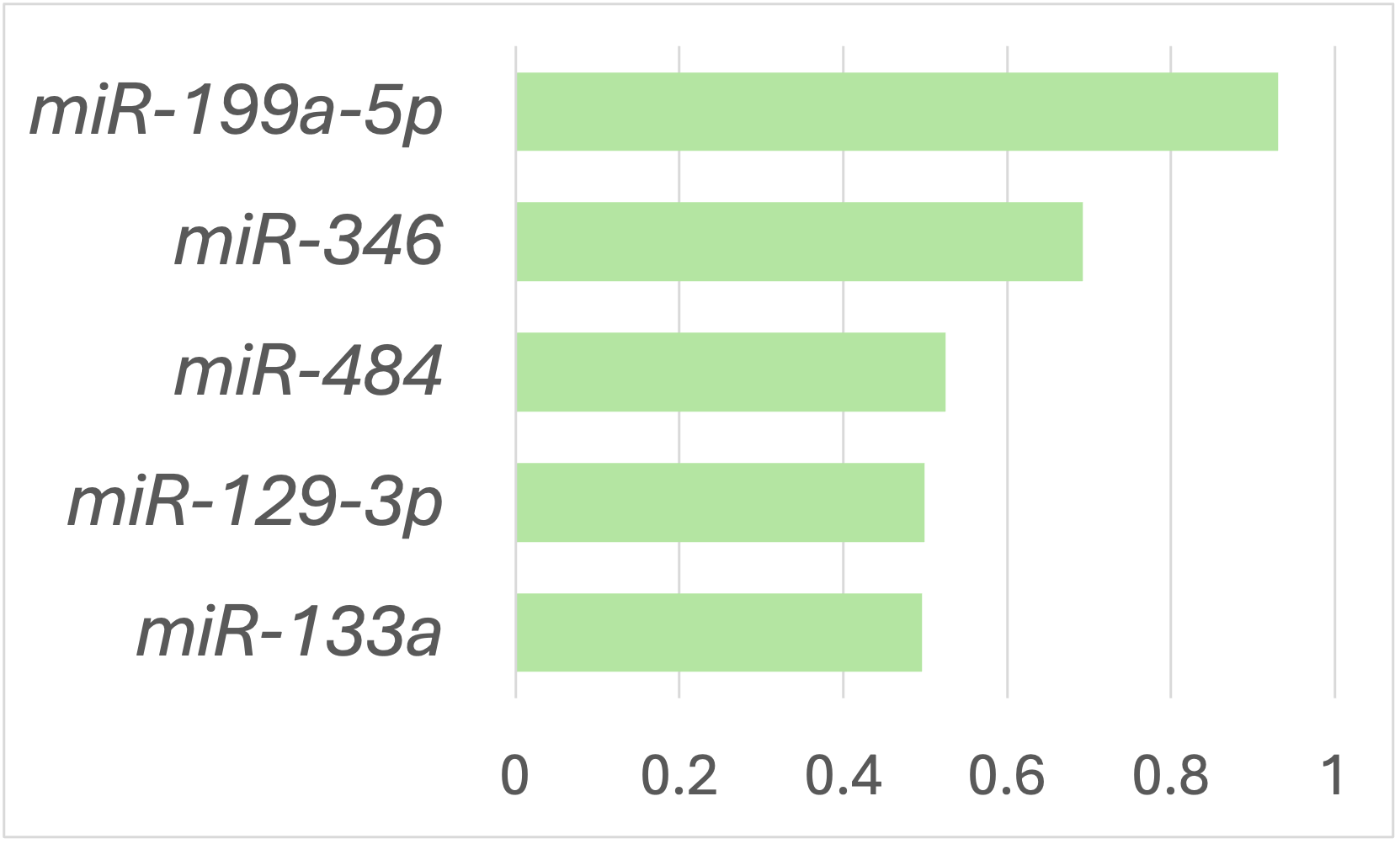}
  \end{minipage}}
  \caption{The top-5 biomarkers among the $k=30$ features selected by SMILE are shown for two datasets (top row: \textbf{BRCA}; bottom row: \textbf{ROSMAP}) across three modalities: (a) mRNA expression, (b) DNA methylation, and (c) miRNA expression.}
  \label{fig: gene_explanation}
\end{figure}

Fig.~\ref{fig: iCTCF_visual} presents qualitative examples of \textbf{iCTCF}, which contains two modalities. We find that the generated visual explanations (see Fig.~\ref{fig: iCTCF_image}) successfully localize COVID-19-related lung features, such as Consolidation, Ground-Glass Opacities (GGOs), and Crazy-paving patterns, for 2D montage images. For selected clinical information in Fig.~\ref{fig: iCTCFvector}, ALG, LYP, CRP, GGT, LDH, NEP, and AST are task-relevant references~\cite{ning2020open}.
\begin{figure*}[thp]
\centering
  \subfloat[\label{fig: iCTCF_image}]{
  \centering
    \includegraphics[width=\linewidth]{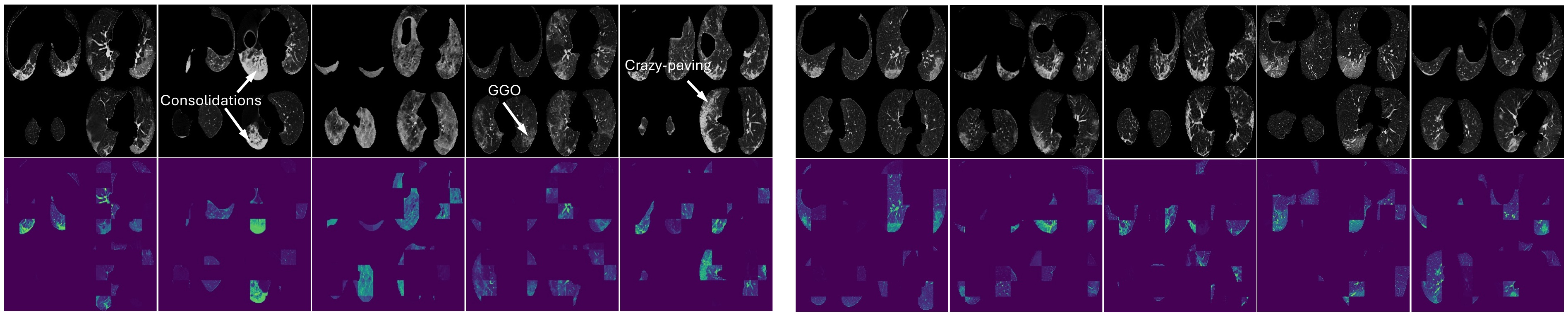}}
  \hfill
  \subfloat[\label{fig: iCTCFvector}]{
  \centering
    \includegraphics[width=\linewidth]{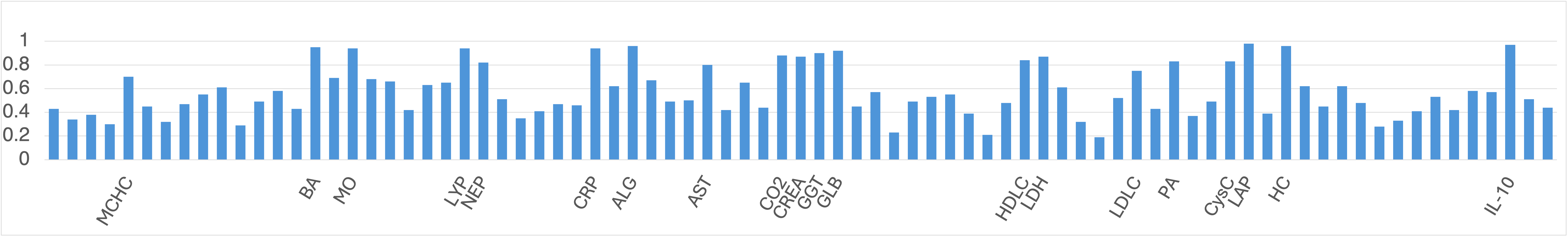}}
   \caption{\textbf{iCTCF} visualizations on two modalities. (a) Qualitative examples of CT scan in the 2D montage: each explanation contains $k = 30$ patches. The left rows present severe patients' samples; the right rows present mild patients' samples. (b) Qualitative examples of clinical information: normalized results from 150 test samples, $k = 20$ features selected.}
   \label{fig: iCTCF_visual}
\end{figure*}

Fig.~\ref{fig: Glaucoma} presents visual explanations for two modalities in \textbf{Glaucoma Grading}, demonstrating that SMILE effectively identifies key features in both multi-channel and single-channel images.
\begin{figure*}[t]
\centering
  \subfloat[\label{fig: Glaucoma_fundus}]{
  \centering
    \includegraphics[width=0.47\linewidth]{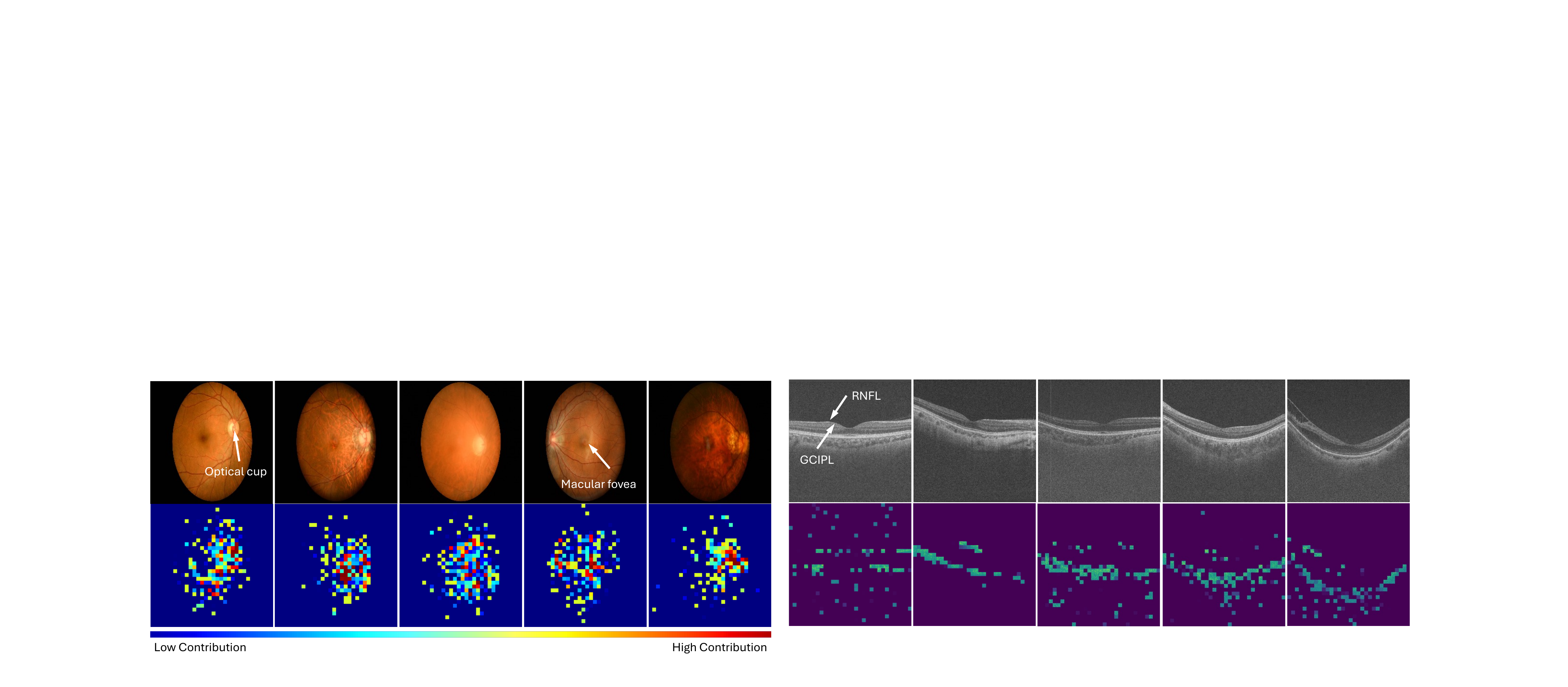}}
  \hspace{0.3mm}
  \subfloat[\label{fig: Glaucoma_CT}]{
  \centering
\includegraphics[width=0.47\linewidth]{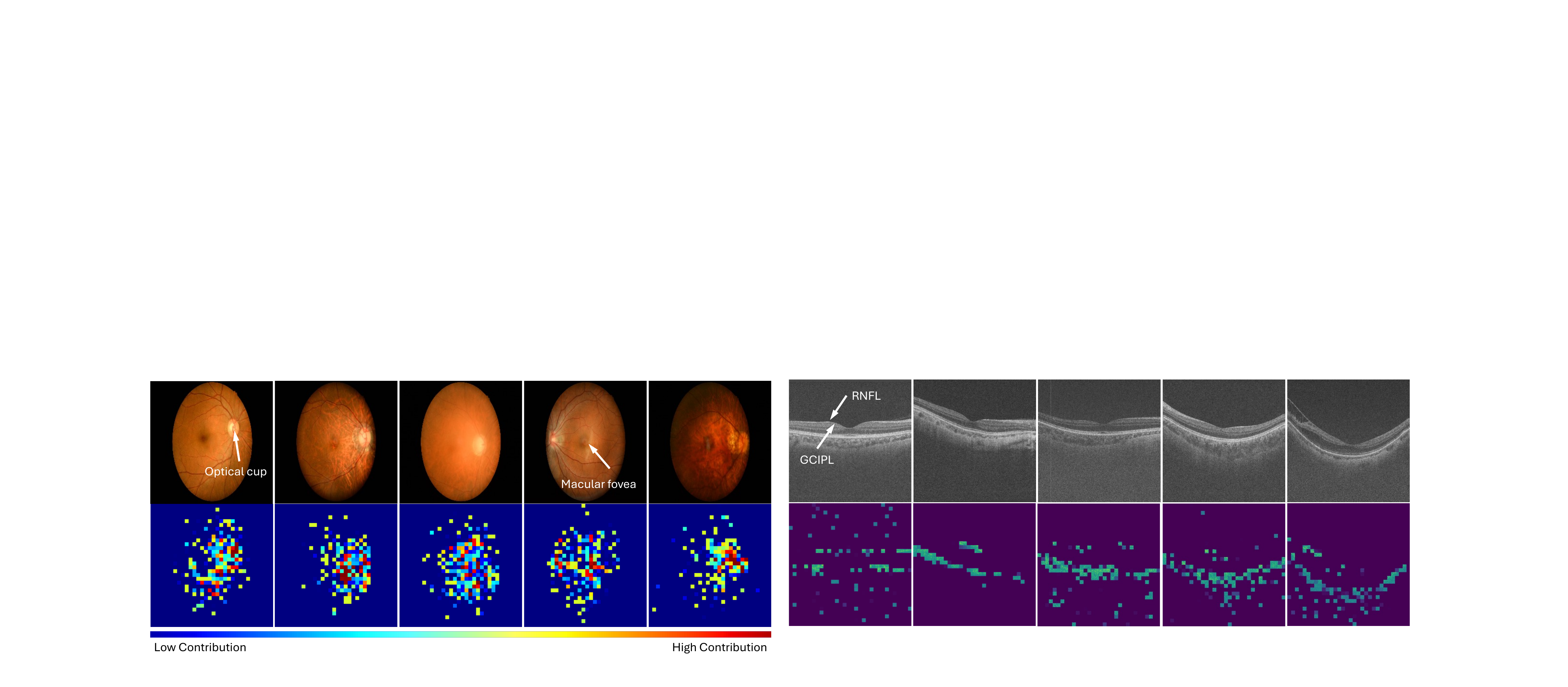}}
   \caption{\textbf{Glaucoma Grading} visualizations on two modalities. (a) Qualitative examples of color fundus ($k = 100$); (b) Qualitative examples of OCT slices ($k = 100$)}
   \label{fig: Glaucoma}
\end{figure*}


Fig.~\ref{fig:REST-meta} demonstrates the extracted explanations from both fMRI and sMRI modalities on the \textbf{REST-meta-MDD} dataset. To examine what SMILE attends to at the population level, we aggregated per-subject selections into frequencies within each group. The structural selector returned sparse, anatomically coherent subvolumes rather than scattered voxels, and the functional selector a compact rather than diffuse edge set. In six of ten runs, structural selection converged on a posterior occipital--parietal territory, indicating that the pattern reflects the learned model rather than a single initialization. This territory coincides with regions of reduced cortical thickness identified in a vertex-wise meta-analysis of 64 cohorts from the ENIGMA MDD and DIRECT consortia---the latter supplied our data---including the lateral occipital and parietal cortices~\cite{yan2026vertex}, and with morphometric findings on REST-meta-MDD itself~\cite{wang2023brain}. The selected edges center on occipito-temporal pathways, complementing the default-mode alterations reported previously in this cohort~\cite{yan2019reduced}. One caveat: the HC and MDD maps overlap substantially, so they evidence anatomically plausible and reproducible attention, not group-discriminative biomarkers. The co-localization of sMRI subvolumes with fMRI edge endpoints nonetheless indicates that the model draws on convergent structural and functional evidence.

\begin{figure*}[t]
\centering
  \subfloat[\label{fig: HC}]{
  \centering
    \includegraphics[width=0.47\linewidth]{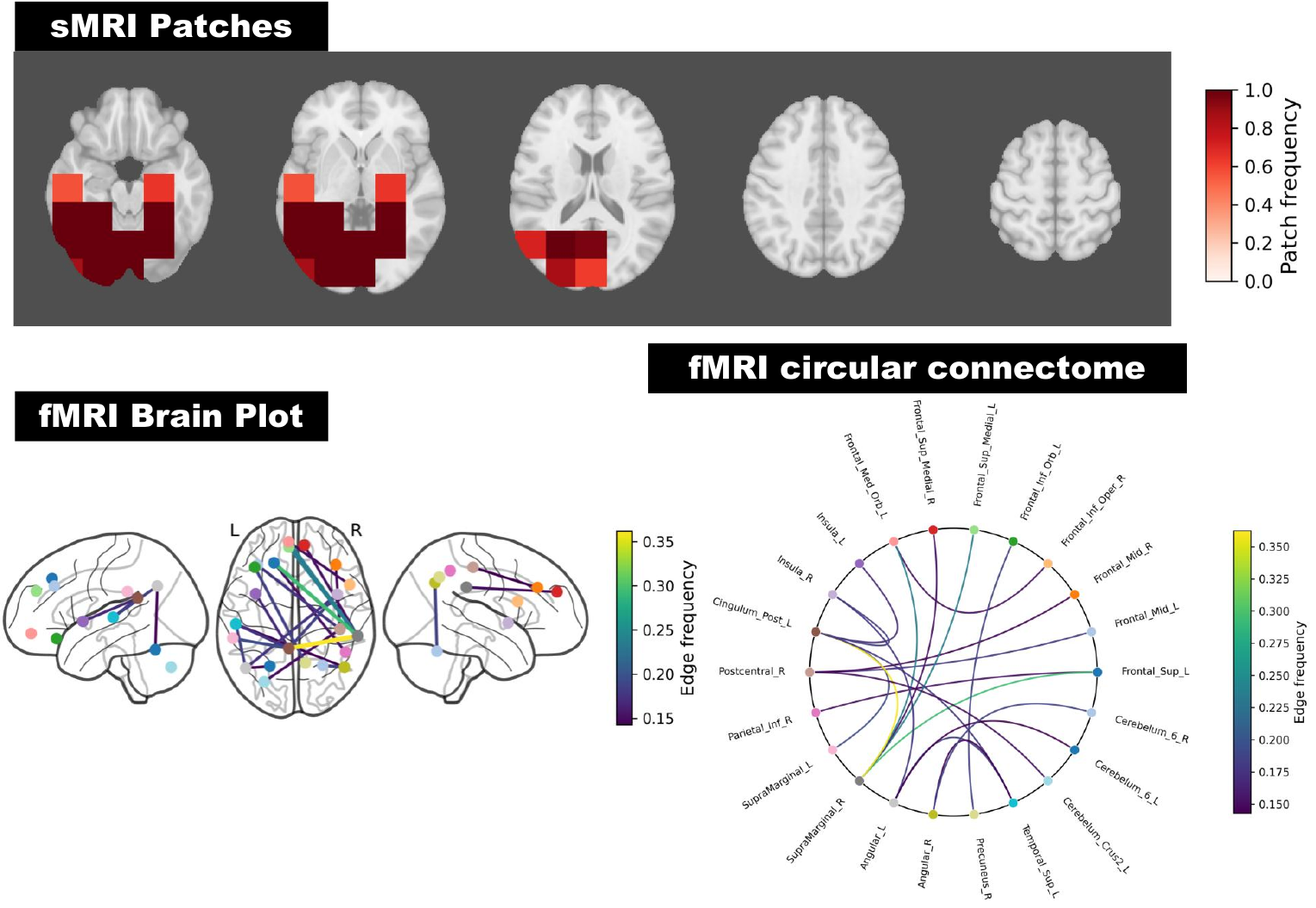}}
  \hspace{0.4mm}
  \subfloat[\label{fig: MDD}]{
  \centering
\includegraphics[width=0.47\linewidth]{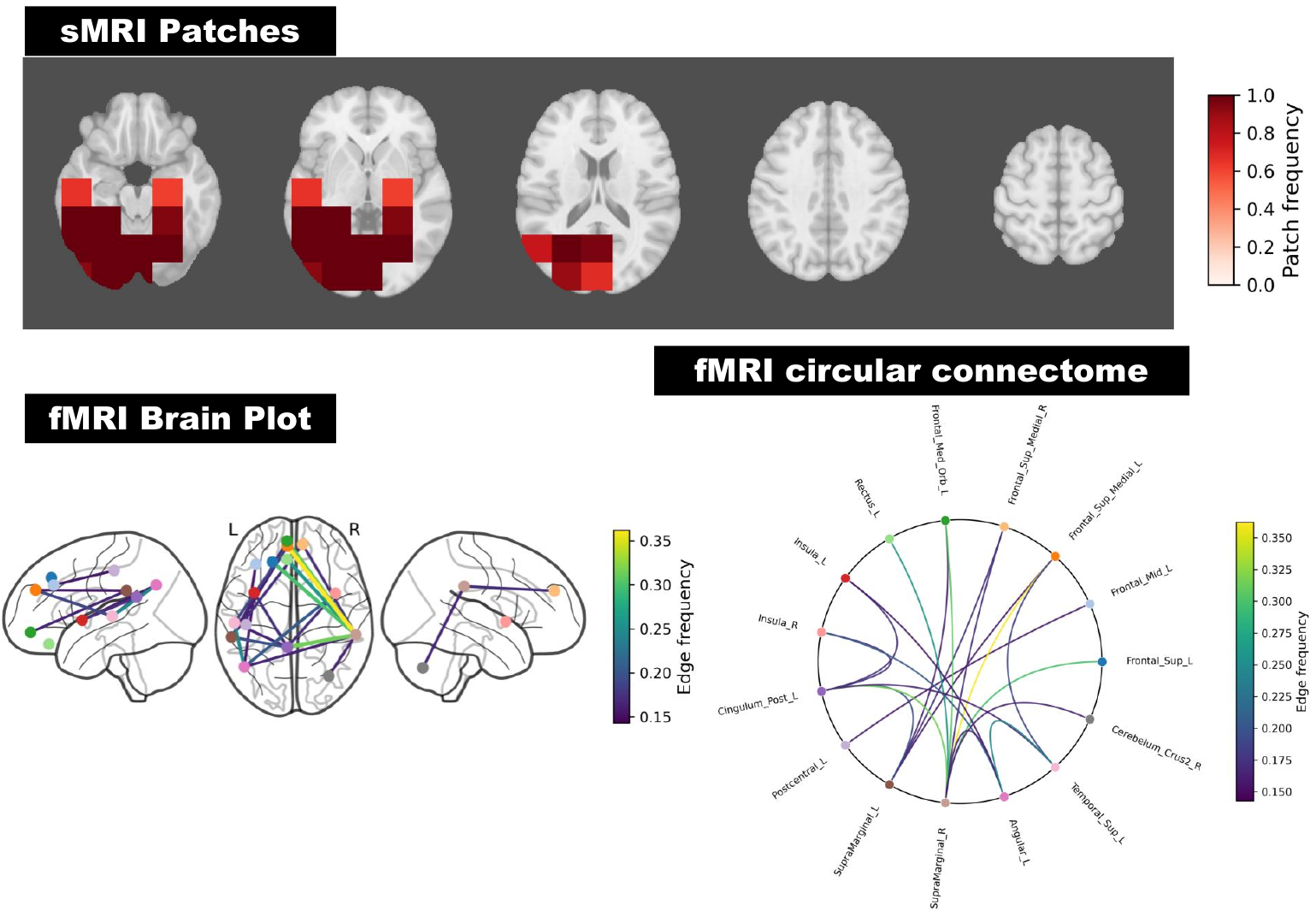}}
   \caption{\textbf{REST-meta-MDD} biomarker visualizations on two modalities. (a) HC ($k = 25$); (b) MDD ($k = 25$). Top: sMRI subvolumes on axial slices. Bottom left: selected rs-fMRI edges on the cortical surface, nodes colored by AAL-116 ROI. Bottom right: circular connectome of the same edges with ROI labels. Color encodes selection frequency. Descriptive aggregates, not statistical comparisons; one run, whose posterior structural pattern recurred in 6 of 10 runs.}
   \label{fig:REST-meta}
\end{figure*}

\subsection{Comparison to Post-hoc Explanations}

A central claim of self-explainable modelling is that explanations optimized jointly with the predictor are more faithful than those attached post hoc. To ensure a fair comparison, we apply LIME and SHAP to Multimodal IB~\cite{mai2022multimodal} with late fusion (L-MIB), which uses the same encoders as SMILE; the two models differ only in the built-in explainer. We evaluate the image modality with two metrics, both lower-is-better. \emph{Fidelity} is the drop in predicted-class probability when only the regions marked as important are retained. \emph{Consistency} is maximum sensitivity~\cite{yeh2019fidelity}: the largest change in the explanation
under a small input perturbation. SMILE attains the best value on both (Table~\ref {tab:example}), with visual examples in Fig.~\ref{fig: post-hoc}.

\begin{figure}[t!]
  \centering   \includegraphics[width=0.94\linewidth]{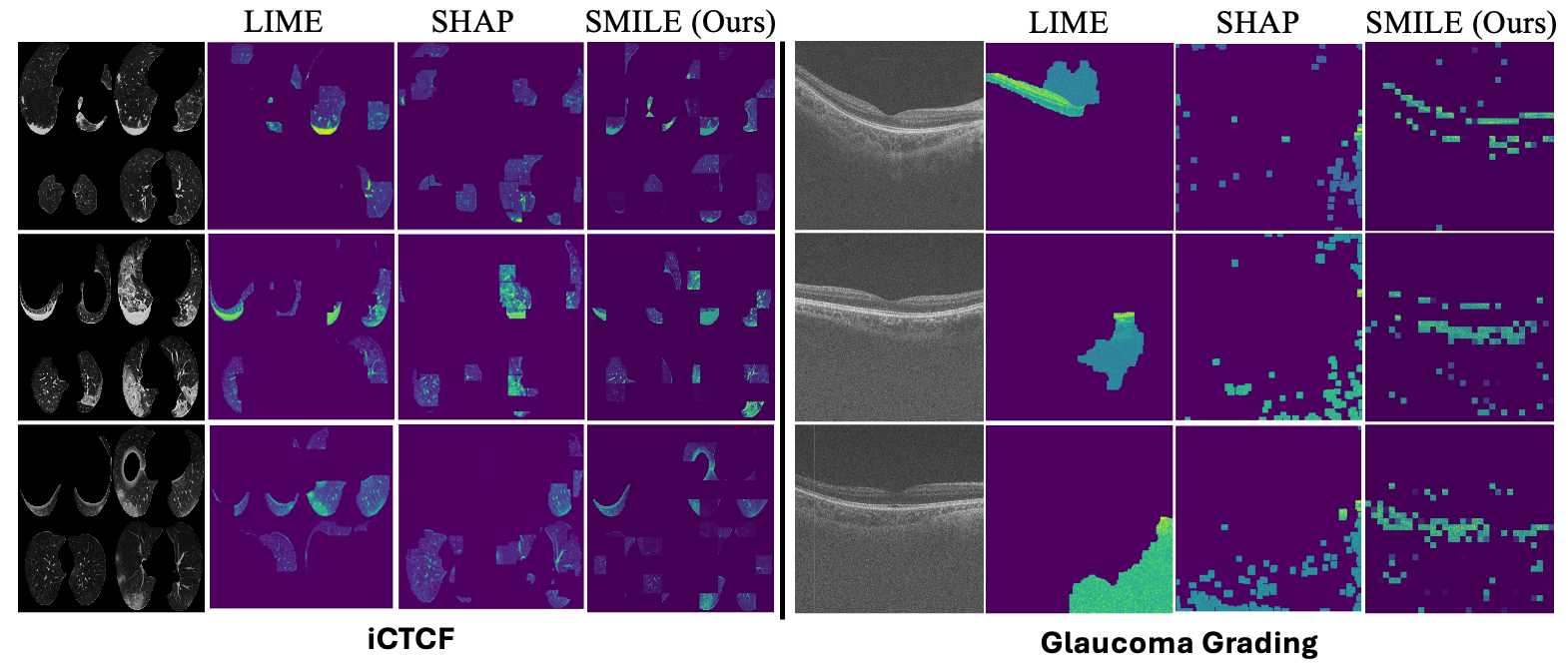}
   \caption{{Qualitative examples of explanations generated by LIME, SHAP, and SMILE on different datasets. For CT scans in \textbf{iCTCF} dataset, we select $k=30$ with the patch size of $70\times 70$; for OCT slices in \textbf{Glaucoma Grading} dataset, $k=100$ with the patch size of $4\times 4$.}}
   \label{fig: post-hoc}
\end{figure}

\begin{table}[t!]
    \centering
    \caption{Comparison between built-in (ours) and post-hoc explanations. A lower value indicates better performance for metrics.}
    \resizebox{0.7\linewidth}{!}{%
\renewcommand{\arraystretch}{1}
\begin{tabular}{ccccccc}
\toprule
\multirow{3}{*}{Dataset} &\multicolumn{3}{c}{\begin{tabular}[c]{@{}c@{}}Fidelity\\ (average confidence drop[\%] $\downarrow$ )\end{tabular}} & \multicolumn{3}{c}{\begin{tabular}[c]{@{}c@{}}Consistency\\ (maximum sensitivity$\downarrow$)\end{tabular}}  \\ \cmidrule(lr){2-4}\cmidrule(lr){5-7}
&LIME      & SHAP     & SMILE     & LIME     & SHAP     & SMILE     \\\hline
iCTCF & 36.4$\pm$2.1         &   48.1$\pm$1.8       &    \textbf{30.2$\pm$0.9}       &   0.48       & 0.37         &\textbf{0.22}\\
Glaucoma&12.0$\pm$0.8&15.1$\pm$1.4&\textbf{9.2$\pm$0.6}&0.39&0.41&\textbf{0.27}\\
\bottomrule
\end{tabular}}
    \label{tab:example}
\end{table}

\begin{table}[thp]
\centering
\caption{Ablation study results on different datasets. The modality mentioned refers to the object we operate on.}
\resizebox{0.7\linewidth}{!}{%
\renewcommand{\arraystretch}{1.1}
\begin{tabular}{c|c|cc}
\toprule
Dataset                      & \multicolumn{1}{c|}{Modality}      & w/o $X^m$ & w/ $E_r$  \\
\midrule
\multirow{8}{*}{\begin{tabular}[c]{@{}c@{}}BRCA/\\ ROSMAP\end{tabular}} 
& mRNA           & 80.4$\pm$1.1/78.6$\pm$1.8   &  84.7$\pm$1.9/82.3$\pm$2.0      \\
& DNA meth       & 84.2$\pm$0.7/81.2$\pm$1.4   & 85.0$\pm$1.5/81.8$\pm$1.8        \\
& miRNA          & 83.9$\pm$0.9/79.0$\pm$1.5    & 84.3$\pm$1.5/82.1$\pm$1.7        \\\cmidrule{2-4}
& mRNA+DNA meth  & 68.4$\pm$1.4/63.7$\pm$1.8    &  77.8$\pm$4.2/75.4$\pm$5.2        \\
& mRNA+miRNA     & 66.7$\pm$1.8/61.5$\pm$2.0   &  75.9$\pm$4.7/71.1$\pm$5.8       \\
& miRNA+DNA meth & 71.4$\pm$1.1/65.8$\pm$1.3    &  79.6$\pm$3.9/78.3$\pm$4.9        \\\cmidrule{2-4}
& All            & --    &  70.7$\pm$8.3/67.1$\pm$9.7  \\\cmidrule{2-4}
& \multicolumn{3}{c}{\textit{Ours}: \textbf{87.3$\pm$0.2/85.1$\pm$1.1}}\\ \midrule
\multirow{4}{*}{iCTCF}                                                      
& 2D Montage                &66.8$\pm$2.3        & 82.2$\pm$4.9         \\
& Clinical features         &68.9$\pm$2.7           &  83.6$\pm$ 3.6       \\\cmidrule{2-4}
& Both                         & --    & 80.3$\pm$ 7.2         \\ \cmidrule{2-4}
& \multicolumn{3}{c}{\textit{Ours}: \textbf{92.4$\pm$0.5}}\\ \midrule
\multirow{4}{*}{\begin{tabular}[c]{@{}c@{}}Glaucoma\\ Grading\end{tabular}} 
& Color fundus                 &64.3$\pm$2.1     &68.3$\pm$3.9          \\
& OCT slices                    &56.5$\pm$3.4     &65.9$\pm$4.8          \\\cmidrule{2-4}
& Both                         &--     & 67.1$\pm$6.2     \\ \cmidrule{2-4}
& \multicolumn{3}{c}{\textit{Ours}: \textbf{72.4$\pm$0.5}}\\ 
\midrule
\multirow{4}{*}{\begin{tabular}[c]{@{}c@{}}REST-meta-MDD\end{tabular}} 
& rs-fMRI               &  61.0$\pm$2.2   &   61.9 $\pm$1.0      \\
& sMRI                    & 52.5$\pm$1.1   &   56.6$\pm$6.6      \\\cmidrule{2-4}
& Both                         &--     &   63.9$\pm$0.9  \\ \cmidrule{2-4}
& \multicolumn{3}{c}{\textit{Ours}: \textbf{67.4$\pm$2.3}}\\ 
\bottomrule
\end{tabular}
}
\label{tab: ablation}
\end{table}

\subsection{Ablation Study}
In this section, we analyze the contributions of the explainer module to quantify the effectiveness of the generated explanations. We consider two variants of SMILE: (\romannumeral1) without specific modality $X^m$, which serves as a modality-ablation baseline. (\romannumeral2) with a random mask $E_r$, which modifies (\ref{eq: multi_loss2}) s.t. we use a random mask $r_i^m$ instead of explanation $e_i^m$ from the explainer. We present their performance with ACC scores (Kappa scores for the Glaucoma grading dataset) in Table~\ref{tab: ablation}.

The results reveal two key insights: (\romannumeral1) Modalities interact synergistically to achieve high performance, highlighting the need to assess each modality's contribution in multimodal contexts. (\romannumeral2) While random masks in information bottleneck training improve performance over the baseline, they introduce higher variance. In contrast, our explainer captures more decision-relevant information and exhibits greater stability than the random mask approach.

\section{Conclusion and Future Work}
\label{sec: conclude}

We proposed SMILE, a self-explainable multimodal framework grounded in the IB principle for AI-based medical diagnosis. SMILE comprises three modules: modality-specific explainers, encoders mapping medical data to low-dimensional latent representations, and a fused classifier.
Theoretically, under the single-modality setting and with $\beta=0$, the SMILE objective reduces to the widely used INVASE objective as a special case. Moreover, our analysis shows that introducing the IB-based compression term does not necessarily impose an intrinsic accuracy penalty; instead, it can improve the generalization bound by controlling the amount of information retained in the selected representations.
Across five representative multimodal medical datasets spanning heterogeneous modalities, SMILE exceeds state-of-the-art diagnostic performance while producing faithful modality-specific explanations.

Our future work is twofold. First, the sufficient encoder assumption~\cite{tian2020makes} in (\ref{eq: multi_loss2}) simplifies optimization but remains over-optimistic; estimating or approximating $I(X;\tilde{X})$ without it is an open problem, even from an information theory perspective. Second, SMILE assumes that all modalities may contribute and focuses on modality-specific informative features, leaving cross-modal structure implicit. Moving forward, we aim to explicitly model high-order modality interaction terms such as synergy.

\newpage

\bibliographystyle{unsrtnat}
\bibliography{main}
\newpage
\appendix

\section{Proof to Proposition~1}

Our proposed IB objective for single modality feature selection is expressed as:
\begin{equation}\label{eq: obj1_supp}
    \min_{E} - I(Y; \tilde{X}) + \beta I(\tilde{X}; {X}) + \lambda \|{E}\|_0.
\end{equation}

\begin{proposition} \label{proposition1}
    The IB objective in Eq.~(\ref{eq: obj1_supp}) encompasses that of INVASE as a special case when $\beta=0$, i.e., INVASE lacks a compression term. 
\end{proposition}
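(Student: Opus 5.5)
The plan is to set $\beta=0$ in Eq.~(\ref{eq: obj1_supp}), so the objective becomes $-I(Y;\tilde{X})+\lambda\|E\|_0$. Then I would show that the INVASE term $D_{\text{KL}}(p(Y|X)\,\|\,p(Y|\tilde{X}))$, understood as its expectation over $X$ (as is implicit in INVASE), equals $-I(Y;\tilde{X})$ plus a term that does not depend on $E$. Once this is established, the two objectives share the same minimizers over $E$ for every $\lambda\ge 0$. Hence INVASE is the special case of Eq.~(\ref{eq: obj1_supp}) with $\beta=0$, i.e., the IB objective without its compression term.

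The key step is an expansion of the expected KL divergence:
\begin{equation*}
\mathbb{E}_{p(x)}\big[D_{\text{KL}}(p(Y|x)\,\|\,p(Y|\tilde{x}))\big]
= \mathbb{E}_{p(x,y)}[\log p(y|x)] - \mathbb{E}_{p(x,y)}[\log p(y|\tilde{x})]
= -H(Y|X) + H(Y|\tilde{X}).
\end{equation*}
The second equality uses that $\tilde{X}=E\odot X$ is a deterministic function of $X$. Consequently $p(x,y)$ induces the joint law of $(\tilde{X},Y)$, and the cross term is exactly $-H(Y|\tilde{X})$. Adding and subtracting $H(Y)$ rewrites this as $I(Y;X)-I(Y;\tilde{X})$. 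The term $I(Y;X)$ is fixed by the data distribution and does not depend on $E$. Therefore
\begin{equation*}
\mathbb{E}_{p(x)}\big[D_{\text{KL}}(p(Y|x)\,\|\,p(Y|\tilde{x}))\big]+\lambda\|E\|_0
= I(Y;X)-I(Y;\tilde{X})+\lambda\|E\|_0,
\end{equation*}
which is Eq.~(\ref{eq: obj1_supp}) at $\beta=0$ up to an additive constant. As a by-product, the data-processing inequality gives $I(Y;\tilde{X})\le I(Y;X)$, so the KL is nonnegative, as it must be.

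The main obstacle is a subtlety in justifying $p(y|\tilde{x})$ when the mask $E$ is itself sample-dependent, since $E$ is produced by the selector from $X$. Here $\tilde{X}$ must be treated as the pair (selected values, selection pattern), exactly as INVASE does. With that convention, $\tilde{X}$ remains a deterministic measurable function of $X$, and the conditional-entropy identity goes through unchanged. If the Gumbel-Softmax relaxation is used instead, the mask becomes stochastic given $X$. In that case I would state the proposition for the deterministic (hard top-$k$) explainer assumed in Section~\ref{sec: method}, so that the Markov chain $Y\to X\to\tilde{X}$ holds and the argument above applies.
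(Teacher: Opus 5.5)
Your proposal is correct and follows essentially the same route as the paper: expand the expected KL as $-H(Y|X)+H(Y|\tilde{X}) = I(X;Y)-I(Y;\tilde{X})$ and drop the $E$-independent constant $I(X;Y)$, with your pushforward argument for the deterministic map $X\mapsto\tilde{X}$ playing the role of the paper's appeal to the Markov chain $\tilde{X}\leftarrow X\rightarrow Y$, i.e.\ $p(Y|X)=p(Y|X,\tilde{X})$. One small correction to your final caveat: that Markov chain still holds for a Gumbel-Softmax (or any stochastic) mask whose noise is independent of $Y$ given $X$, so the same identity goes through with the KL averaged over $(X,\tilde{X})$, and the restriction to hard top-$k$ masks is not needed.
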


\begin{proof}
The INVASE formulates the learning of $E$ in a single modality as:
\begin{equation}\label{eq: obj_INVASE_supp}
    \min_E \mathbb{E}_{X} \left[ D_{\text{KL}}\left( p(Y|X) \| p(Y|\tilde{X}) \right) \right]  + \lambda \|{E}\|_0,
\end{equation}
where $D_{\text{KL}}$ refers to the Kullback-Leibler (KL) divergence.

The term $ \mathbb{E}_{X} \left[ D_{\text{KL}}\left( p(Y|X) \| p(Y|\tilde{X}) \right) \right] $ can be decomposed as follows:
\begin{equation}
\begin{split}
    & \mathbb{E}_{X} \left[ D_{\text{KL}}\left( p(Y|X) \| p(Y|\tilde{X}) \right) \right] \\
    & = \int_X p(X) \left( \int_Y p(Y|X) \log\left( \frac{p(Y|X)}{p(Y|\tilde{X})} \right) dY \right) dX \\
    & = \mathbb{E}_{p(X,Y)} \left[ \log\left( \frac{p(Y|X)}{p(Y|\tilde{X})} \right) \right] \\
    & = -H(Y|X) + H(Y|\tilde{X}) \\
    & = -H(Y|X) + H(Y) - (- H(Y|\tilde{X}) + H(Y)) \\
    & = I(X;Y) - I(Y;\tilde{X}),
\end{split}
\end{equation}
where we used the Markov chain property $\tilde{X}\leftarrow X\rightarrow Y$, i.e.,
$P(Y|X) = P(Y|X,\tilde{X})$.

From the above relationship, minimizing the expected KL divergence $ \mathbb{E}_{X} \left[ D_{\text{KL}}\left( p(Y|X) \| p(Y|\tilde{X}) \right) \right] $ is equivalent to maximizing the mutual information $I(Y;\tilde{X})$, since $I(X;Y)$ is a fixed value which only depends on training data and is irrelevant to optimization~\cite{piran2020dual}.

Hence, the INVASE objective essentially amounts to:
\begin{equation}
    \min_{E} - I(Y; \tilde{X}) + \lambda \|{E}\|_0.
\end{equation}    

\end{proof}

\section{Proof to Proposition~\ref{proposition2}}

\begin{proposition}\label{proposition2}
    With probability at least $1-\delta$ over the training data $t= \{ \overline{\overline{x}}_i,  y_i\}_{i=1}^N $ drawn from a data distribution $p(\overline{\overline{x}},y)$, where $ \overline{\overline{x}}_i = \{x_i^m\}_{m=1}^M$, the generalization error
    $\Delta(t) = \mathbb{E}_{p(\overline{\overline{x}},y)} \left[\ell(f^t(\overline{\overline{x}},y)) \right] - \frac{1}{N}\sum_{i=1}^N  \ell(f^t(\overline{\overline{x}}_i,y_i)) $ roughly obeys the following form:
\begin{equation}
    \Delta(t) = \tilde{\mathcal{O}} \left( \sqrt{ \frac{\sum_{m}^M I(\tilde{X}^m;X^m) +1}{N} } \right) \:\: \text{as} \:\: N \rightarrow \infty,
\end{equation}
    where $\ell$ is a bounded per-sample loss, $f^t$ is the full model obtained by training over $t$.
\end{proposition}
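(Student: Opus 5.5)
The plan is to reduce the statement to the information-bottleneck generalization bound of \cite{kawaguchi2023does} and then push that bound through the multimodal channel structure of Fig.~\ref{fig:flow}. I will use the result in the following form. For a model that factors as an encoder $\overline{\overline{X}} \to \overline{Z}$ followed by a predictor $\overline{Z}\to Y$, with a bounded per-sample loss, with probability at least $1-\delta$,
\begin{equation*}
\Delta(t) \le \tilde{\mathcal{O}}\left(\sqrt{\frac{I(\overline{\overline{X}};\overline{Z}\mid t)+1}{N}}\right)
\end{equation*}
as $N\to\infty$. Here $\tilde{\mathcal{O}}$ absorbs logarithmic factors, $\log(1/\delta)$, and the information the learned encoder parameters retain about the training set. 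I will treat that last term as a lower-order quantity, exactly as Kawaguchi et al.\ do; this is why the statement says ``roughly''.

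First I verify the hypotheses. The per-sample loss $\ell$ is bounded by assumption. The full model $f^t$ factors as the explainers $X^m\mapsto \tilde X^m$, then the encoders $h_{\phi_m}$, then the concatenation $g_\omega$, then the classifier. Taking $\overline{Z}$ as the bottleneck variable therefore places us in the setting of the cited bound, with $\overline{\overline{X}}=\{X^m\}_{m=1}^M$ as the joint source.

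Second, I bound $I(\overline{\overline{X}};\overline{Z})$ by the modality-wise compression terms. Because $g_\omega$ is a deterministic function of $(\tilde X^1,\dots,\tilde X^M)$, the data processing inequality gives
\begin{equation*}
I(\overline{\overline{X}};\overline{Z}) \le I\bigl(\overline{\overline{X}};\tilde X^1,\dots,\tilde X^M\bigr).
\end{equation*}
Each $\tilde X^m$ is computed from $X^m$ alone, so the chain rule gives
\begin{equation*}
I\bigl(\overline{\overline{X}};\tilde X^{1:M}\bigr)=\sum_m I\bigl(\overline{\overline{X}};\tilde X^m\mid \tilde X^{1:m-1}\bigr).
\end{equation*}
Since the masking is deterministic, $I(\overline{\overline{X}};\tilde X^{1:M}) = H(\tilde X^{1:M})$. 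Subadditivity of entropy then yields
\begin{equation*}
H(\tilde X^{1:M}) \le \sum_m H(\tilde X^m) = \sum_m I(\tilde X^m;X^m),
\end{equation*}
where the final equality is the identity $H(\tilde X^m\mid X^m)=0$ already noted in the paper. Substituting this into the Kawaguchi bound gives the claimed rate. The sufficient-encoder assumption, in the sense of invariance under $h_{\phi_m}$, then justifies replacing $I(\tilde X^m;X^m)$ by the latent quantity $I(\tilde Z^m;Z^m)$ that is actually estimated.

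The main obstacle is the step that uses the cited theorem. In Kawaguchi et al.\ the information quantity is conditioned on the trained encoder, and the bound carries an extra term for the information the encoder parameters $\phi,\psi$ hold about $t$. In addition, deterministic maps on continuous inputs can have infinite mutual information. I will handle this as follows. I will work with a discretized or quantized representation, which is also what makes Kawaguchi's bound finite. I will state explicitly that the parameter-information term is absorbed into $\tilde{\mathcal O}$, and that the ``$+1$'' collects the remaining constant and $\log(1/\delta)$ contributions. I will also note that the subadditivity step becomes tight when the modalities' selections are independent, and is otherwise only an upper bound, which is what the statement needs.
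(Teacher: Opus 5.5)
Your argument is correct at the statement's level of rigor. It follows the paper's skeleton: Kawaguchi's bound, a data-processing step, then a decoupling inequality across modalities. Where you differ is the decoupling lemma.

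The paper applies the DPI only down to the tuple $\overline{\overline{Z}}$ of modality latents fed to $g_\omega$. It then cites the product-channel inequality $I(\overline{\overline{X}};\overline{\overline{Z}})\le\sum_m I(X^m;Z^m)$, valid whenever $p(\overline{\overline{Z}}\mid\overline{\overline{X}})=\prod_m p(Z^m\mid X^m)$. It finishes with the per-modality DPI through $\tilde X^m$.

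You go straight down to $\tilde X^{1:M}$ and use $H(\tilde X^{1:M}\mid\overline{\overline{X}})=0$ plus subadditivity of entropy. That is the standard proof of the same product-channel inequality, specialised to deterministic channels.

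Your route is self-contained and elementary, but it needs determinism and finite discrete entropies, which is why you need the quantisation caveat. The paper's cited inequality holds for any conditionally independent stochastic channels. So it would also cover a stochastic explainer such as the Gumbel-Softmax sampler, and it passes through the latent-level bound $\sum_m I(X^m;\tilde Z^m)$ with $\tilde Z^m=h_{\phi_m}(\tilde X^m)$, which is at least as tight.

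You can recover that generality in one line. If the per-modality noise is independent, then $H(\tilde X^{1:M}\mid\overline{\overline{X}})=\sum_m H(\tilde X^m\mid X^m)$. Subtracting this from your subadditivity bound gives $\sum_m I(X^m;\tilde X^m)$ directly, without assuming determinism.

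Finally, your chain-rule display is never used. The ``so'' that introduces it is a non sequitur, since the chain rule holds regardless of how each $\tilde X^m$ is computed. You can delete it.
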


\begin{proof}

Our proof is based on Theorem~\ref{theorem}\cite{kawaguchi2023does}.

\begin{theorem}\label{theorem}
       With probability at least $1-\delta$ over the training data $t= \{ x_i,  y_i\}_{i=1}^N $ drawn from a data distribution $p(x,y)$, the generalization error
    $\Delta(t) = \mathbb{E}_{p(x,y)} \left[\ell(f^t(x,y)) \right] - \frac{1}{N}\sum_{i=1}^N  \ell(f^t(x_i,y_i)) $ roughly obeys the following form:
\begin{equation}
    \Delta(t) = \tilde{\mathcal{O}} \left( \sqrt{ \frac{ I(X;Z_l^t) +1}{N} } \right) \:\: \text{as} \:\: N \rightarrow \infty,
\end{equation}
    where $\ell$ is a bounded per-sample loss, $f^t$ is the full model obtained by training over $t$, $Z_l^t$ is representation obtained after passing $X$ through the first $l$ layers of model $f^t$.
\end{theorem}

In our case, we can treat $\{X^m\}_{i=1}^M$ as a joint information source, denoted by $\overline{\overline{X}}$, which transmits information through a parameterized channel to $\bar{Z}$ and subsequently to~$Y$ (see also Fig. 3 in the main text). Then, according to Theorem~\ref{theorem},
\begin{equation}
    \Delta(t) = \tilde{\mathcal{O}} \left( \sqrt{ \frac{ I(\overline{\overline{X}};\bar{Z}) +1}{N} } \right) \:\: \text{as} \:\: N \rightarrow \infty.
\end{equation}

Further, we can treat $\{Z^m\}_{i=1}^M$ as a joint information source, denoted by $\overline{\overline{Z}}$. Due to the data processing inequality ($\bar{Z}=g_\omega(\overline{\overline{Z}})$), the generalization error can be upper bounded by:
\begin{equation}
    \Delta(t) = \tilde{\mathcal{O}} \left( \sqrt{ \frac{ I(\overline{\overline{X}};\overline{\overline{Z}}) +1}{N} } \right) \:\: \text{as} \:\: N \rightarrow \infty.
\end{equation}

Lastly, we note that each $Z^m$ only depends on the corresponding $X^m$ through the modality-specific encoder, i.e.,
\begin{equation}
\begin{split}
    & X^1 \rightarrow Z^1, \\
    & X^2 \rightarrow Z^2, \\
    & \cdots \\
    & X^M \rightarrow Z^M. \\
\end{split}
\end{equation}

Then the conditional distribution of $ \overline{\overline{Z}} $ given $\overline{\overline{X}}$ factorized as:
\begin{equation}
    p( \overline{\overline{Z}}|\overline{\overline{X}} ) = \prod_{m=1}^M p(Z^m|X^m).
\end{equation}

So as long the channels are decoupled like this, we always have~\cite{polyanskiy2024information}\cite[Chapter~12]{wu2017lecture}:
\begin{equation}
    I(\overline{\overline{X}};\overline{\overline{Z}}) \leq \sum_{m=1}^M I(X^m;Z^m),
\end{equation}
which, due to the data processing inequality, can be further upper bounded by:
\begin{equation}
    I(\overline{\overline{X}};\overline{\overline{Z}}) \leq \sum_{m=1}^M I(X^m;\tilde{X}^m).
\end{equation}

Hence, the generalization error 
$\Delta(t)$ scales as $\tilde{\mathcal{O}} \left( \sqrt{ \frac{\sum_{m}^M I(\tilde{X}^m;X^m) +1}{N} } \right)$.

\end{proof}

\section{Implementation Details}

\subsubsection{BRCA}
\noindent\textbf{mRNA encoder.}
The mRNA encoder takes the selected 1000-dimensional expression vector as input and maps it to a 500-dimensional latent representation via a fully connected projection layer, followed by ReLU activation and dropout with $p=0.5$. The same encoder is applied to both the original and selected inputs, allowing the mutual-information regularizer to compare modality representations before and after selection. 

\textbf{mRNA selector.} The mRNA selector is an MLP with layer dimensions $1000\rightarrow2000\rightarrow2000\rightarrow1000$. GELU activations are applied after the first two linear layers, and the output layer produces one selection logit per mRNA feature. SMILE applies a differentiable top-$k$ relaxation to these logits and obtains the hard explanation mask by thresholding at the $k$-th largest logit. We set $k=30$ for this modality. 

\textbf{DNA-methylation encoder.} The DNA-methylation encoder follows the same fully connected design as the mRNA encoder. It maps the selected 1000-dimensional methylation vector to a 500-dimensional latent representation using a linear projection, ReLU activation, and dropout with $p=0.5$.

\textbf{DNA-methylation selector.} The DNA-methylation selector uses layer dimensions $1000\rightarrow2000\rightarrow2000\rightarrow1000$ with GELU activations between hidden layers. The selector produces feature-level logits over all methylation variables, and the SMILE top-$k$ operation retains $k=30$ selected methylation features. 

\textbf{miRNA encoder.}
The miRNA encoder maps the selected 503-dimensional miRNA-expression vector to a 500-dimensional latent representation. As in the other omics branches, the encoder consists of a fully connected projection followed by ReLU activation and dropout with $p=0.5$. 

\textbf{miRNA selector.} The miRNA selector uses layer dimensions $503\rightarrow1000\rightarrow1000\rightarrow503$. Its output logits are converted into a sparse differentiable mask by the SMILE selector, and the selected mask keeps $k=30$ miRNA features. 

\textbf{Classifier head.} The three selected modality representations are concatenated into a 1500-dimensional multimodal representation. A fully connected classifier maps this fused representation to the five PAM50 subtype labels. 

\textbf{Training details.} We train the BRCA model for 2000 epochs using Adam with an initial learning rate of $10^{-4}$ and weight decay $10^{-4}$. The learning rate is multiplied by 0.2 every 500 epochs. The mutual-information regularization weight is set to $\beta=0.08$.

\subsubsection{ROSMAP} \textbf{mRNA encoder.} The mRNA encoder takes the selected 200-dimensional expression vector and projects it to a 300-dimensional latent representation with a fully connected layer, ReLU activation, and dropout with $p=0.5$. 

\textbf{mRNA selector.} The mRNA selector is an MLP with layer dimensions $200\rightarrow400\rightarrow400\rightarrow200$. It outputs one selection logit per mRNA feature, and SMILE retains the top $k=30$ features through the differentiable top-$k$ selector. 

\textbf{DNA-methylation encoder.} The DNA-methylation encoder maps the selected 200-dimensional methylation vector to a 300-dimensional latent representation through the same fully connected encoder used for the mRNA branch. 

\textbf{DNA-methylation selector.} The methylation selector uses layer dimensions $200\rightarrow400\rightarrow400\rightarrow200$ with GELU activations. The selector produces methylation-feature logits and keeps $k=30$ methylation features.

\textbf{miRNA encoder.} The miRNA encoder maps the selected 200-dimensional miRNA-expression vector to a 300-dimensional latent representation using a fully connected projection, ReLU activation, and dropout with $p=0.5$. 

\textbf{miRNA selector.} The miRNA selector also uses layer dimensions $200\rightarrow400\rightarrow400\rightarrow200$. The selected miRNA mask is obtained with the same differentiable top-$k$ procedure, with $k=30$. 

\textbf{Classifier head.} The three selected 300-dimensional modality embeddings are concatenated into a 900-dimensional representation. A fully connected classifier maps the fused representation to the binary diagnostic label. 

\textbf{Training details.}
We train the ROSMAP model for 1000 epochs using Adam with an initial learning rate of $10^{-4}$ and weight decay $10^{-4}$. The learning rate is multiplied by 0.2 every 200 epochs. The mutual-information regularization weight is set to $\beta=0.1$.

\subsubsection{iCTCF} 
\textbf{Clinical-feature encoder.}
The clinical encoder maps the selected 81-dimensional clinical vector to a 1024-dimensional latent representation. It consists of two fully connected layers with dimensions $81\rightarrow64\rightarrow1024$, with ReLU activations after both layers. 

\textbf{Clinical-feature selector.} The clinical selector follows the non-image selector used for omics data. It uses an MLP with layer dimensions $81\rightarrow100\rightarrow100\rightarrow81$ and GELU activations between hidden layers. The output logits define feature-level selection scores over all clinical variables, and SMILE keeps $k=20$ selected clinical features. 

\textbf{HRCT encoder.} The HRCT encoder takes a ten-channel $700\times700$ image tensor as input. It uses three convolutional blocks. The first block maps the input channels to 32 channels with a $3\times3$ convolution of stride 2, followed by ReLU and $2\times2$ max pooling. The second and third blocks use the same design and map $32\rightarrow64$ and $64\rightarrow128$ channels, respectively. The spatial resolution is reduced as $700\times700\rightarrow175\times175\rightarrow44\times44\rightarrow11\times11$. The resulting feature map is flattened and projected to a 1024-dimensional HRCT representation, followed by ReLU activation. 

\textbf{HRCT selector.} The HRCT selector operates at the patch level. It applies a lightweight convolutional selector to produce logits on a $10\times10$ patch grid for each montage channel, corresponding to $70\times70$ image patches in the original $700\times700$ resolution. The selector logits are converted into a differentiable top-$k$ mask, which is upsampled by nearest-neighbor interpolation before being multiplied by the HRCT input. We set the HRCT selection budget to $k=60$ patches. 

\textbf{Classifier head.} The 1024-dimensional clinical representation and 1024-dimensional HRCT representation are concatenated into a 2048-dimensional multimodal vector. The classifier maps this vector through $2048\rightarrow256\rightarrow2$ with ReLU activation before the output layer. 

\textbf{Training details.} We train five models from scratch, one for each validation fold, for 300 epochs using Adam with a learning rate of $10^{-3}$, weight decay of $10^{-4}$, and a batch size of 8. The model with the highest validation AUC is selected for final testing. The mutual-information regularization weights are set to 0.01 for the clinical branch and 0.005 for the HRCT branch.

\subsubsection{Glaucoma Grading} \textbf{Fundus encoder.} The fundus encoder uses an EfficientNet-B3 branch to map the selected color fundus image to a 1000-dimensional representation. This representation is used as the modality embedding for multimodal fusion. 

\textbf{Fundus selector.}
The fundus selector operates on the resized $256\times256$ RGB image. It contains three convolutional blocks with channel progression $3\rightarrow64\rightarrow128\rightarrow256$. Each block uses a $3\times3$ convolution, batch normalization, ReLU activation, and $2\times2$ max pooling, reducing the spatial grid to $32\times32$. A final $1\times1$ convolution produces patch-level logits. SMILE selects $k=100$ fundus patches from this grid, upsamples the mask to the input resolution, and multiplies it with the fundus image before encoding. This corresponds to $8\times8$ patches in the original fundus resolution. 

\textbf{OCT encoder.} The OCT encoder uses a ResNet-18 branch adapted to ten-channel OCT input. Specifically, the first convolution is modified to accept 10 input channels, and the final fully connected classification layer is removed. The resulting OCT representation is 512-dimensional. 

\textbf{OCT selector.} The OCT selector contains four convolutional blocks with channel progression $10\rightarrow64\rightarrow128\rightarrow256\rightarrow512$. Each block uses a $3\times3$ convolution, batch normalization, ReLU activation, and $2\times2$ max pooling. The selector therefore reduces the $512\times512$ OCT tensor to a $32\times32$ patch-logit grid. A final $1\times1$ convolution outputs OCT selection logits, and SMILE keeps $k=100$ OCT patches. The selected mask is upsampled to the OCT input resolution and applied before the OCT encoder. Each selected location corresponds to a $16\times16$ OCT patch. 

\textbf{Classifier head.} The 1000-dimensional fundus representation and 512-dimensional OCT representation are concatenated into a 1512-dimensional multimodal vector. The classifier maps this vector through $1512\rightarrow756\rightarrow3$ with ReLU activation before the final three-class output. 

\textbf{Training details.} We train with Adam using a learning rate of $10^ {-3}$, weight decay of $10^ {-4}$, batch size 8, and a maximum of 1000 epochs. A step scheduler multiplies the learning rate by 0.2 every 500 epochs. The mutual-information regularization weight is set to 0.001 for each image modality.

\subsubsection{REST-meta-MDD}

\textbf{Functional connectivity graphs.}
For rs-fMRI, we use ROI-averaged BOLD time series parcellated by AAL-116. The preprocessing follows the consortium pipeline: the first 10 volumes are discarded; slice-timing correction, head-motion realignment, MNI normalization, temporal band-pass filtering from 0.01 to 0.10 Hz, and nuisance regression are applied. The nuisance regressors include head motion, global brain signal, white matter, cerebrospinal fluid, and linear/quadratic drift terms. For each subject, we compute a $116\times116$ Pearson functional-connectivity matrix and apply Fisher's $z$ transform. Nodes correspond to AAL ROIs, and each node feature is its full Fisher-$z$ correlation profile. To obtain a shared topology, subject-level connectivity matrices are averaged at the dataset level, and the top 20\% positive correlations are retained as admissible edges. This gives a fixed binary adjacency shared by all subjects. 

\textbf{Graph encoder.}
The rs-fMRI encoder is a 3-layer GCN with ReLU activations and dropout $p=0.15$. It maps node features through $116\rightarrow116\rightarrow96\rightarrow64$. The resulting 64-dimensional node embeddings are $\ell_2$-normalized and aggregated by second-order pooling. Specifically, a 3-layer MLP maps node embeddings through $64\rightarrow32\rightarrow32\rightarrow32$, and the bilinear outer product is flattened into a 1024-dimensional graph embedding. 

\textbf{Subgraph selector.} The subgraph selector first linearly projects ROI correlation profiles to 64-dimensional node embeddings. It then constructs pairwise edge embeddings by concatenating node pairs, maps each 128-dimensional edge embedding through an MLP $128\rightarrow64\rightarrow1$, and reshapes the outputs into an edge-logit matrix. Non-admissible edges are masked out using the shared adjacency. A soft top-$k$ operation converts the remaining logits into a sparse differentiable edge mask. The selection temperature is annealed from 2.0 to 0.05 during warm-up to avoid premature hard selection while preserving gradient flow. In our multimodal experiment, the selector retains $k=25$ rs-fMRI edges, approximately 1\% of admissible pairs. 

\textbf{sMRI preprocessing.}
For sMRI, we use gray-matter-volume (GMV) maps generated by a voxel-based morphometry pipeline. The pipeline includes bias-field correction, GM/WM/CSF segmentation, MNI normalization, and cross-site intensity normalization. The input image is the mwc1 GMV map with spatial size $121\times145\times121$, and each subject's GMV map is $z$-normalized before model input.

\textbf{sMRI encoder.} The sMRI encoder is a five-layer 3D CNN adapted to volumetric inputs. The first four convolutional layers use kernel size 3, stride 1, padding 1, batch normalization, ReLU activation, and $2\times2\times2$ max pooling, reducing the spatial size as
$121\times145\times121 \rightarrow 60\times72\times60 \rightarrow 30\times36\times30 \rightarrow 15\times18\times15 \rightarrow 7\times9\times7$.
The fifth convolution maps the feature map to 64 channels without further pooling, preserving the $7\times9\times7$ grid. The channel progression is $28\rightarrow58\rightarrow128\rightarrow256\rightarrow64$. The final feature map is globally averaged and passed through a projection head with dropout $p=0.5$ and two GELU-activated fully connected layers $64\rightarrow64\rightarrow64$, producing a 64-dimensional subject embedding. 

\begin{figure*}[!t]
\centering
\subfloat[]{
    \includegraphics[width=0.48\textwidth]{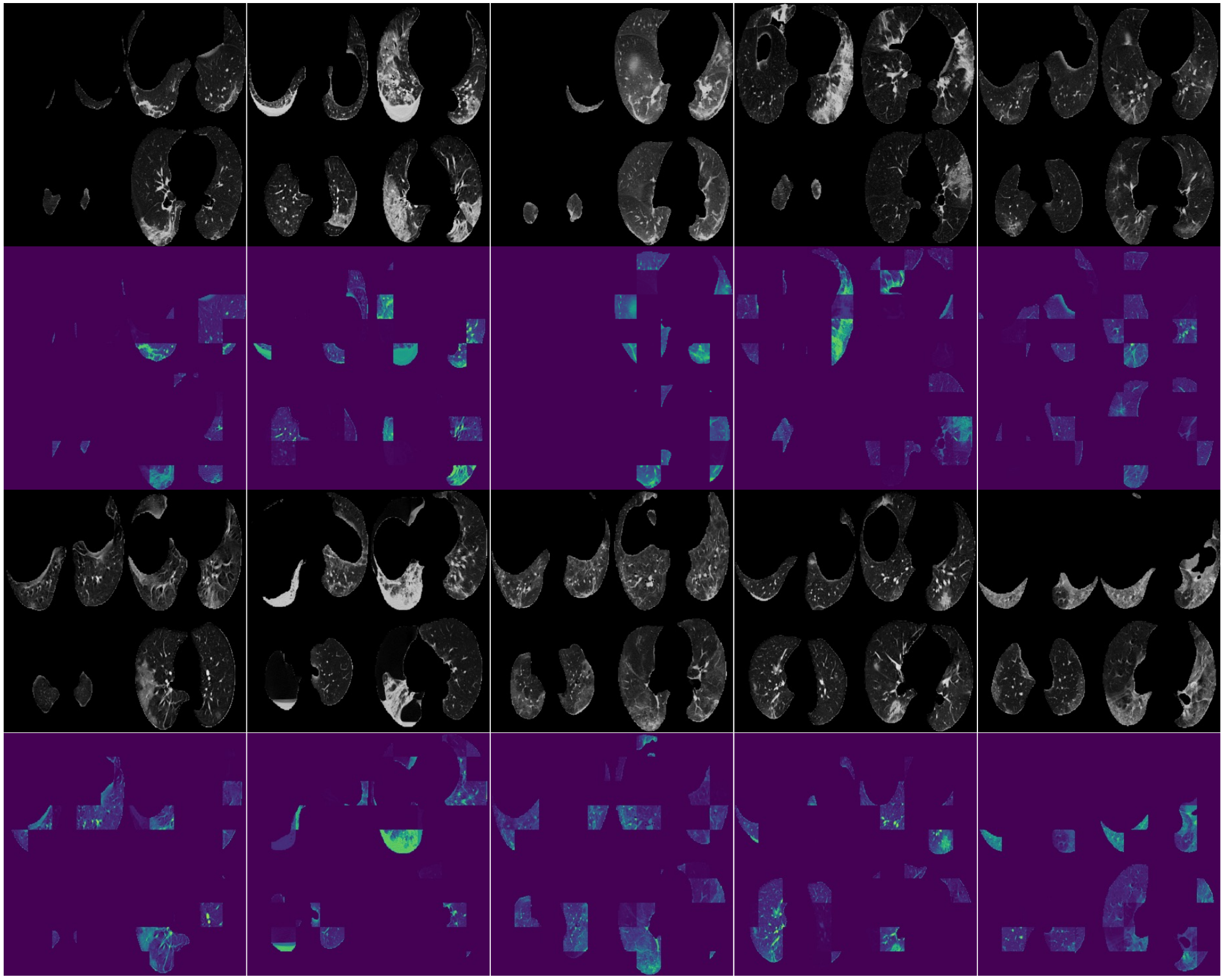}
}
\vspace{1mm}
\subfloat[]{
    \includegraphics[width=0.48\textwidth]{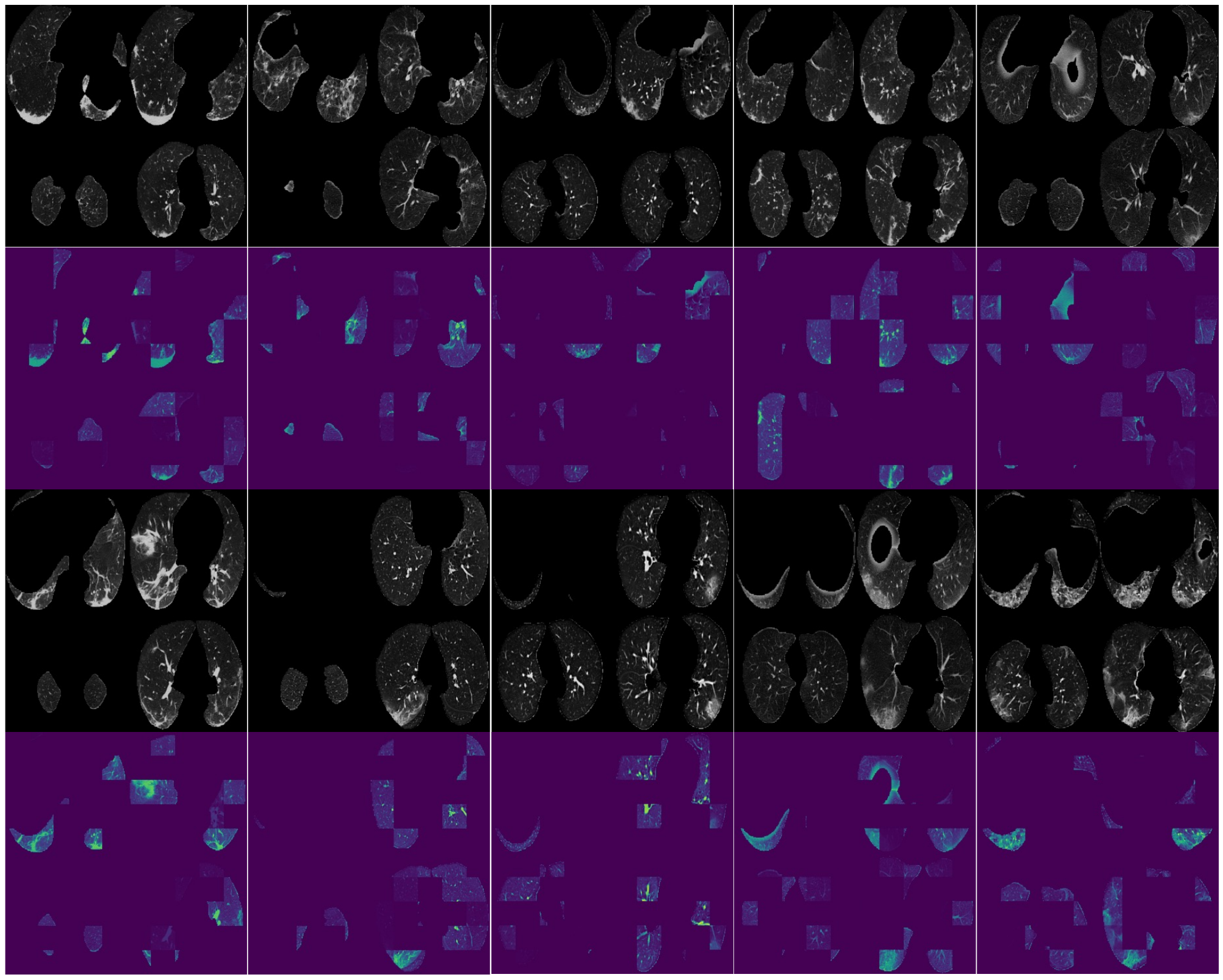}
}
\caption{\textbf{iCTCF} visualizations on 2D HRCT montages. Panels (a) and (b) show representative severe and mild patients, respectively. The highlighted regions indicate the HRCT patches selected by SMILE as prognostic evidence.}
\label{fig: lung}
\end{figure*}

\begin{figure*}[!t]
\centering
\subfloat[]{
    \includegraphics[width=0.48\textwidth]{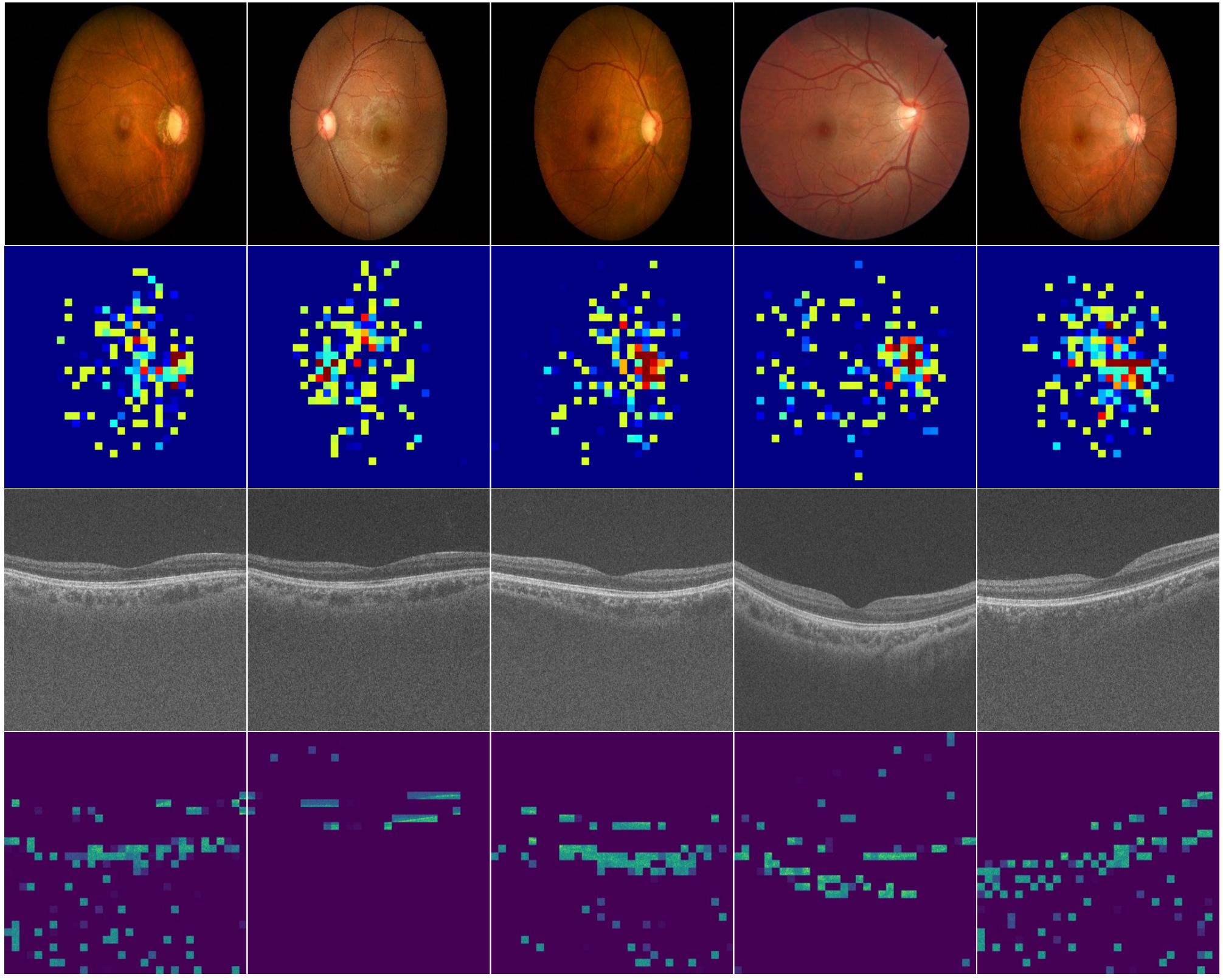}
}
\vspace{1mm}
\subfloat[]{
    \includegraphics[width=0.48\textwidth]{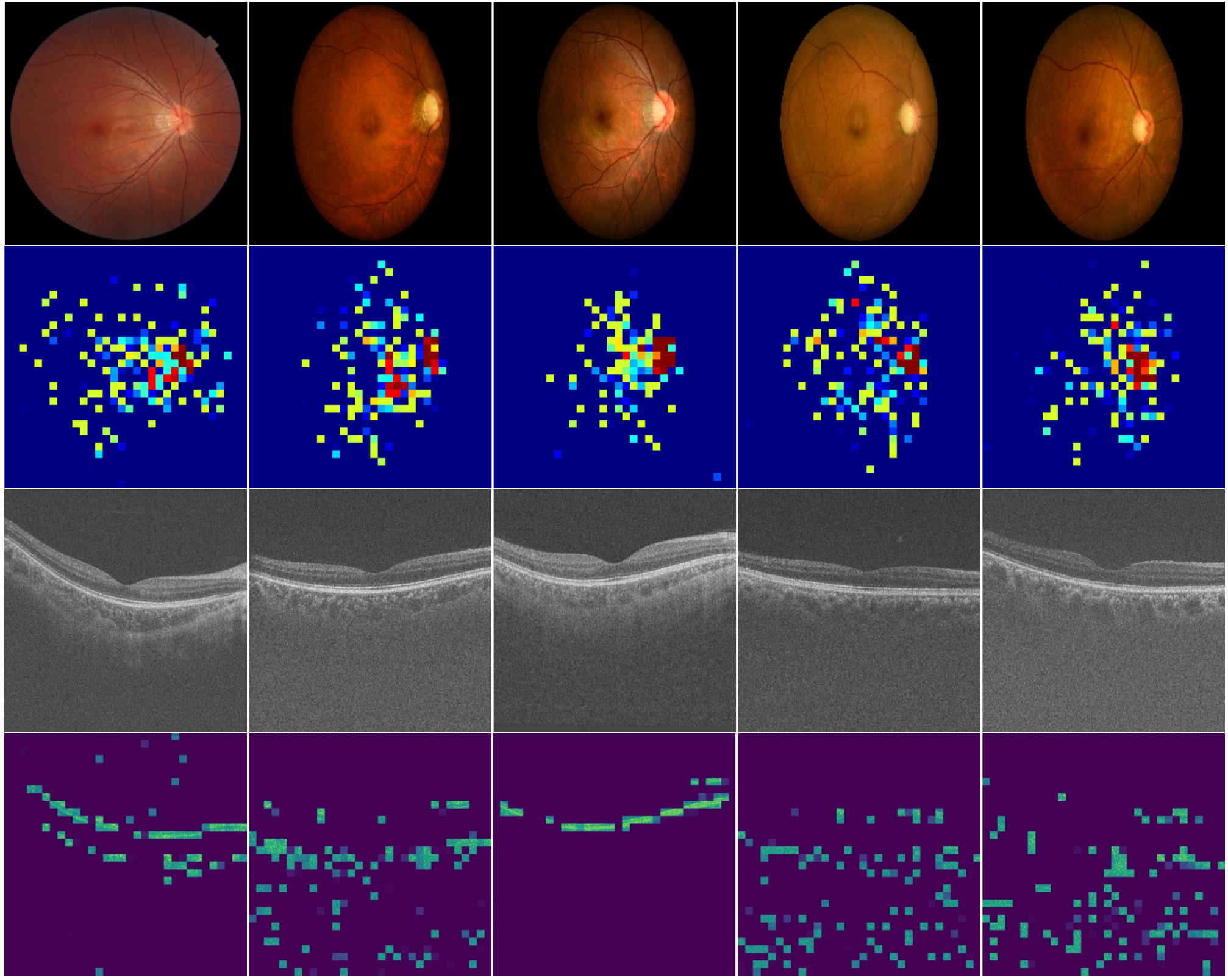}
}
\caption{\textbf{Glaucoma Grading} visualizations. Panels (a) and (b) show additional representative cases under the same multimodal setting. SMILE highlights selected regions on color fundus images and OCT slices, providing modality-specific evidence for glaucoma-grade prediction.}
\label{fig: glau}
\end{figure*}

\textbf{Volume selector.}
The sMRI selector is a lightweight 3D convolutional network that reduces the volume to a $7\times9\times7$ logit grid, corresponding to 441 candidate subvolumes. A final $1\times1\times1$ convolution outputs one logit per subvolume. Soft top-$k$ selection retains $k=25$ subvolumes, approximately 5\% of the grid. During the first 200 warm-up epochs, the selection budget is annealed from 441 to 25 and the temperature is annealed from 1.0 to 0.05. Logistic noise is added to the subvolume logits for robustness. The resulting soft mask is upsampled to the input resolution and multiplied with the GMV volume before encoding. 

\textbf{Classifier head.}
The 1024-dimensional rs-fMRI graph embedding and 64-dimensional sMRI subject embedding are concatenated into a 1088-dimensional multimodal representation and passed to a classifier head for MDD diagnosis. 

\textbf{Training details.} The maximum training budget is 2000 epochs. The encoders are warmed up for 200 epochs before joint optimization, and early stopping is applied with a patience of 100 epochs according to validation performance. The batch size is 32. Experiments are repeated over 10 runs. The first random seed is 42, and each subsequent run increases the seed by 1024.

\section{Additional Visualization Results}
\noindent\textbf{iCTCF visualizations.}
For iCTCF, we visualize both modalities (see Fig.~\ref{fig: lung}). The HRCT explanation highlights the selected patches on the 2D montage, enabling spatial inspection of imaging evidence. The clinical-feature explanation reports the normalized importance of the selected clinical variables over the test set. We separately show representative severe and mild cases so that the reader can assess whether the model attends to different image regions across prognostic outcomes.


\noindent\textbf{Glaucoma Grading visualizations.}
For Glaucoma Grading, we visualize the color fundus and OCT modalities separately. The selected regions are displayed as heatmaps, enabling examination of whether the model uses anatomically meaningful evidence in each image stream (see Fig.~\ref{fig: glau}).

\end{document}